\crefname{algocf}{alg.}{algs.}
\Crefname{algocf}{Algorithm}{Algorithms}
\newcommand{\E}{{\mathbb E}}
\newcommand{\RR}{{\mathbb R}}
\newcommand{\ep}{\epsilon}
\newcommand{\tr}{\mathrm{Tr}}
\newcommand{\bi}{\begin{itemize}}
\newcommand{\ei}{\end{itemize}}
\newcommand{\beq}{\begin{equation}}
\newcommand{\eeq}{\end{equation}}
\newcommand{\beqa}{\begin{eqnarray*}}
\newcommand{\eeqa}{\end{eqnarray*}}
\newcommand{\btm}{\begin{theorem}}
\newcommand{\etm}{\end{theorem}}
\newcommand{\bpf}{\begin{proof}}
\newcommand{\epf}{\end{proof}}
\newcommand{\bla}{\begin{lemma}}
\newcommand{\ela}{\end{lemma}}
\newcommand{\bdn}{\begin{definition}}
\newcommand{\edn}{\end{definition}}
\newcommand{\bpn}{\begin{proposition}}
\newcommand{\epn}{\end{proposition}}
\newcommand{\bcy}{\begin{corollary}}
\newcommand{\ecy}{\end{corollary}}
\newcommand{\kl}{\mathrm{kl}}
\def\ldotsplus{\mathinner{\ldotp\ldotp\ldotp\ldotp}}
\def\fourdots{\relax\ifmmode\ldotsplus\else$\m@th \ldotsplus\,$\fi}
\newtheorem{theorem}{Theorem}[section]
\newaliascnt{lemma}{theorem}
\newtheorem{lemma}[lemma]{Lemma}
\newaliascnt{proposition}{theorem}
\newtheorem{proposition}[proposition]{Proposition}
\newaliascnt{corollary}{theorem}
\newtheorem{corollary}[corollary]{Corollary}
\newaliascnt{definition}{theorem}
\newtheorem{definition}[definition]{Definition}
\newaliascnt{remark}{theorem}
\newaliascnt{example}{theorem}
\crefname{theorem}{theorem}{theorems}
\Crefname{theorem}{Theorem}{Theorems}
\crefname{lemma}{lemma}{lemmas}
\Crefname{lemma}{Lemma}{Lemmas}
\crefname{proposition}{proposition}{propositions}
\Crefname{proposition}{Proposition}{Propositions}
\crefname{corollary}{corollary}{corollaries}
\Crefname{corollary}{Corollary}{Corollaries}
\crefname{definition}{definition}{definitions}
\Crefname{definition}{Definition}{Definitions}
\crefname{remark}{remark}{remarks}
\Crefname{remark}{Remark}{Remarks}
\crefname{example}{example}{examples}
\Crefname{example}{Example}{Examples}
\begin{document}

\title{Variational Quantum Optimization with Continuous Bandits}

\author{Marc Wanner}
\affiliation{Chalmers University of Technology and University of Gothenburg}
\email{wanner@chalmers.se}

\author{Johan Jonasson}
\affiliation{Chalmers University of Technology and University of Gothenburg}
\email{jonasson@chalmers.se}
\author{Emil Carlsson}
\affiliation{Sleep Cycle AB}
\email{emil.carlsson@sleepcycle.com}
\author{Devdatt Dubhashi}
\affiliation{Chalmers University of Technology and University of Gothenburg}
\email{dubhashi@chalmers.se}

\maketitle

\begin{abstract}
We introduce a novel approach to variational Quantum algorithms (VQA) via continuous bandits. VQA are a class of hybrid Quantum-classical algorithms where the parameters of Quantum circuits are optimized by classical algorithms. Previous work has used zero and first order gradient based methods, however such algorithms suffer from the barren plateau (BP) problem where gradients and loss differences are exponentially small. We introduce an approach using bandits methods which combine global exploration with local exploitation. We show how VQA can be formulated as a best arm identification problem in a continuous space of arms with Lipschitz smoothness. While regret minimization has been addressed in this setting, existing methods for pure exploration only cover discrete spaces. We give the first results for pure exploration in a continuous setting and derive a fixed-confidence, information-theoretic, instance specific lower bound. Under certain assumptions on the expected payoff, we derive a simple algorithm, which is near-optimal with respect to our lower bound. Finally, we apply our continuous bandit algorithm to two VQA schemes: a PQC and a QAOA quantum circuit, showing that we outperform  or are competitive with  state of the art methods based on finite difference schemes.
\end{abstract}

\section{Introduction}

In recent years, \emph{variational quantum computing} has gathered momentum as a promising approach for quantum computers~\cite{Abb24,Cerezo2021}, namely a \emph{hybrid} classical-quantum framework which involves a quantum circuit with gates parameterized by continuous real variables (see Figure~\ref{fig:vqc}). Potential application areas range from Quantum chemistry~\cite{Cao2019}, drug discovery~\cite{Blunt2022} and material science~\cite{Lordi2021} to Finance~\cite{Herman2023}, supply chain management and manufacturing~\cite{refId0}, where the Quantum circuit is used as an accelerator for specific domain-dependent problems. The circuit is specified by a set of real valued parameters which are tuned iteratively to optimal values by observing the circuit output using classical optimization algorithms. This approach has become popular in part to its flexibility, opening up applications in diverse areas in basic science and machine learning, and also because of the hope that it is more robust to the constraints of near-term quantum hardware in the NISQ (Noisy Intermediate Scale Quantum) era.

However, a big challenge for VQAs is that gradient based zero or first-order methods, such as COBYLA~\cite{Powell1994}, become stuck because of the landscape of the optimization problem with increasing problem size: gradients or even loss differences become exponentially small which makes it difficult to identify local descent directions as the system size increases - exponentially many samples are provably necessary to identify descent directions. This is called the \emph{barren plateau} (BP) phenomenon in Quantum computing~\cite{La24} and affects various VQA, such as the Quantum Alternate Operator Ansatz~\cite{hadfield2019from, Fontana2024}. \\
\begin{figure}
    \centering
    \includegraphics[width=0.5\textwidth]{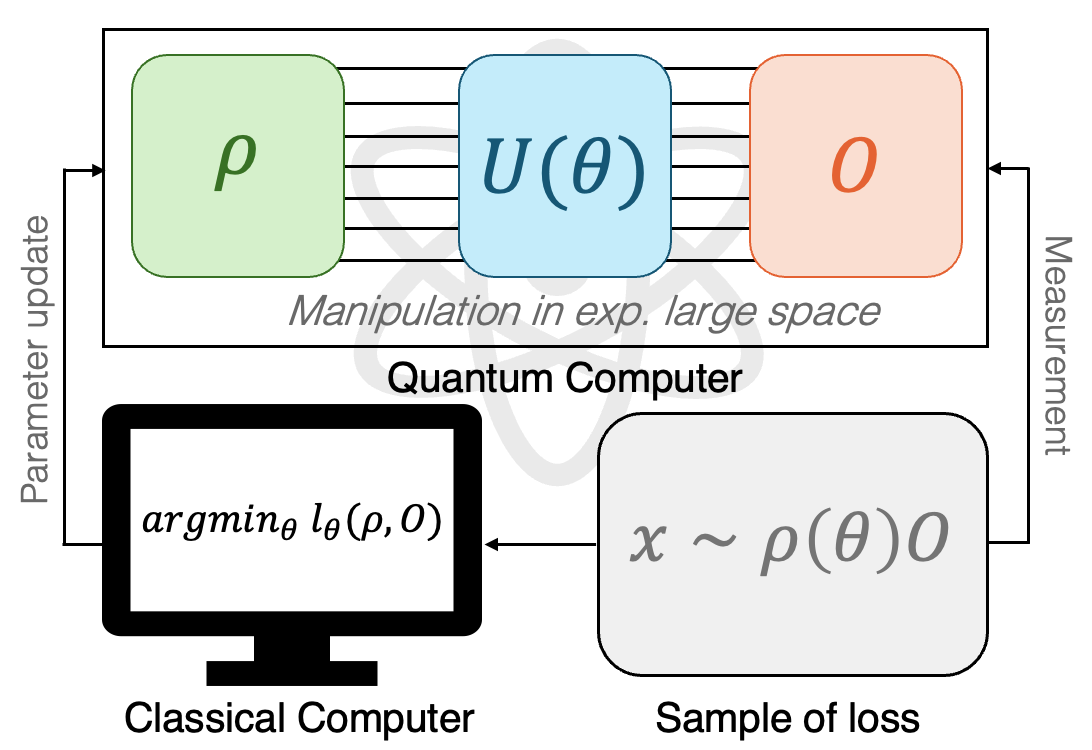}
    \caption{Illustration of VQA inspired by~\cite{La24}}
    \label{fig:vqc}
\end{figure}
While significant effort has been dedicated to gradient based methods and trying to construct VQAs avoiding BPs (see~\Cref{sec:related_work}), addressing the BP problem with other classical optimization techniques has received limited attention: Gradients are viewed as an indispensable component of VQAs, like they are for training classical neural networks. However, the evaluation of gradients or even the objective of a VQA  may require many circuit evaluations, indicating that optimization based on individual VQA ``shots'', instead of the latter quantities, may be more natural.\\
Here we introduce a novel approach to address the BP problem which is theoretically well grounded and also easy to implement in practice. We argue that the optimization problem is very well suited to bandit methods which combine global exploration and local exploitation. The optimal setting of the circuit parameters can be regarded as an optimization problem in a continuous bandit. \\
Black box optimization of noisy cost functions is a widely studied area with many applications \cite{bouneffouf2019surveypracticalapplicationsmultiarmed,Cerezo2021,maghsudi2016multi}. One approach to this problem is via \emph{bandit} algorithms, see ~\cite{lattimore2020bandit} for a textbook treatment.
In this work, we consider the best arm identification problem in bandits ~\cite{audibert:hal-00654404}. Formally, the typical problem setup consists of a set of distributions with means $\{\mu_a\}_{a=1}^A$, where each of them corresponds to an ``arm'' $a$. An agent is then tasked to identify the maximizing parameter, or ``arm'' $a^*$, with a given confidence by drawing a minimal amount of samples from the distributions.\\
For finite set of arms, this problem is well understood and can be solved optimally by e.g. \emph{Track-and-Stop}~\cite{pmlr-v49-garivier16a} or game theoretic exploration schemes~\cite{NEURIPS2019_8d1de745}. Further extensions provide optimal sampling strategies in settings where the arms reveal contextual information~\cite{wang2021fast} or are subject to linear constraints~\cite{pmlr-v238-carlsson24a}. 
An important property of these algorithms is their optimality with respect to the specific problem instance, which fundamentally differs from worst case optimality.\\
We consider the extension of the problem to the \emph{continuous} setting, where the set of arms corresponds to the unit interval $[0,1]$: thus there are uncountably many arms, each corresponding to a point in the unit interval: for every $x \in [0,1]$, we have a distribution with mean $\mu(x)$. We assume the function $\mu: [0,1] \rightarrow \mathbb{R}$, to be Lipschitz. Furthermore, the noise of the drawn samples is assumed to be sub-Gaussian. The task of the agent is to identify an arm $x$, which is $\epsilon$ close to the optimal arm $x^*$ with high probability. The classical optimization problem in variational quantum algorithms corresponds exactly to this best arm identification problem in continuous bandit setting.\\
Continuous bandit optimization was addressed in previous work. Two very well known examples are the X Armed Bandits\cite{BubeckMSS11} and the Zooming algorithm \cite{KleinbergSU19}. These algorithms addressed the \emph{regret} version of the bandit problem which is known to differ in important ways from the best arm identification problem. The former aims to accumulate as few sub-optimal samples as possible on an infinite time horizon, while the latter seeks to identify a probably approximatly correct (PAC) optimum under a minimal amount of samples. Here we address the best arm identification problem in continuous settings, which we consider a better representation for training VQA. Our bandit methods that combine global and local optimization are also applicable to better VQA designs and could be combined into hybrid schemes with purely local gradient based methods.\\
Specifically, we derive an instance specific lower bound for continuous, Lipschitz bandits on $[0,1]$. Furthermore, we present a simple algorithm, whose sample complexity matches the lower bound up a log factor. Our experiments show an improvement in scaling over not instance specific methods and, due to its simplicity, substantially lower cost per iteration with respect to non-adaptive, instance optimal methods. Hence, our algorithm has a practical advantage over these methods, which turn out to be intractable when exceeding a certain number of arms.
Our main contributions are:
\begin{itemize}
    \item We introduce a novel approach to hybrid Quantum-classical algorithms such as parametrized quantum algorithms (PQC) algorithms and QAOA by formulating it as a best arm identification problem in continuous bandits.
    \item We give an information-theoretic, instance specific lower bound for continuous best arm identification (in 1 dimension).
    \item We give an algorithm via adaptive partitioning that essentially meets the lower bound (up to a log factor) and linear computational complexity with respect to the number of samples.
    \item We give an algorithm that serves as a proxy for the multi-dimensional extension of the continuous bandit.
    \item We apply our bandit algorithm to PQC and QAOA problem instances, showing that it outperforms or is competitive with previous (finite difference) methods from literature. We also show synthetic examples where our method works while previous class of methods including SPSA fail catastrophically.
\end{itemize}

\section{Related Work}\label{sec:related_work}
\paragraph*{Structured and Continuous Bandits}
Best arm identification for general bandit problems with a finite set of arms has been explored in both worst-case scenarios~\cite{10.1007/978-3-642-04414-4_7} and instance-specific settings~\cite{pmlr-v49-garivier16a, NEURIPS2019_8d1de745}. Subsequent research has extended this work to provide instance-specific bounds and algorithms for bandits incorporating contextual information, such as those with Linear or Lipschitz-continuous reward functions~\cite{wang2021fast}, as well as bandits subject to constraints~\cite{pmlr-v238-carlsson24a}. Continuous bandits have primarily been studied in the context of regret minimization~\cite{bubeck2011x, KleinbergSU19}, with further refinement in Adaptive-treed bandits~\cite{bull2015adaptive}, which, while focusing on cumulative regret, also offers PAC bounds. In the context of Quantum computing, regret minimization has been studied outside of VQA, where the task is to select an optimal observable from a finite or continuous set~\cite{Lumbreras2022multiarmedquantum, lumbreras2024learningpurequantumstates}, with regret defined as the expected measurement outcome, though the bounds in this setting are not instance-specific. Methods for pure exploration in continuous bandit settings, such as \emph{MFDOO}~\cite{demontbrun2024certified}, have been developed; however, they do not incorporate structural properties of the problem. Additionally, these methods rely on worst-case analysis and lack instance-dependent performance guarantees. There are no known bounds for continuous bandits with Lipschitz reward functions. Our work is the first to address this gap.
\paragraph*{Mitigating BP in VQAs}
Recent research indicates a trade-off between the expressivity and trainability~\cite{Holmes_2022} of specific quantum circuit architectures, often referred to as \emph{ansatz}. If an ansatz lacks sufficient expressivity, it may be incapable of representing the target function. Conversely, provably expressive ansatzes are typically susceptible to the BP phenomenon, which complicates the task of identifying the desired model within the represented model class. 
\cite{La24} provides a summary of current techniques for mitigating BPs, a subset of which we briefly outline. One way to mitigate this issue is to find the best trade-off via adaptive structure search~\cite{Du2022} and \emph{ADAPT-VQE}~\cite{Grimsley2023}. Furthermore, most proofs of presence of BPs only apply to random parameter initialization. Therefore, employing alternative initialization strategies~\cite{NEURIPS2022_7611a3cb} can serve as an effective approach to mitigating this issue. Finally, certain architectures, such as noise-induced shallow circuits~\cite{mele2024noiseinducedshallowcircuitsabsence}, are proven not to have BPs. However, recent work suggests that the absence of BPs implies classical simulability~\cite{cerezo2024doesprovableabsencebarren}. Even in the absence of BPs, the challenge of avoiding local minima remains. Alternative training methods for VQAs, such as those proposed in~\cite{PhysRevX.7.021027, PhysRevA.107.032407, agirre2024montecarlotreesearch,NFT}, lack formal theoretical guarantees, weakening evidence of strong performance on larger problem instances. The same holds for applications of Bayesian optimization~\cite{nicoli2024physicsinformedbayesianoptimizationvariational}, which come with substantial computational complexity and potentially hard optimization of an acquisition function after each circuit evaluation. In this work, we introduce the application of frequentist bandit methods to this domain for the first time.

\section{Preliminaries}\label{sec:preliminaries}
This section introduces the general \emph{stochastic bandit} model and its connection to VQAs. 
\subsection{Continuous Bandits}\label{sec:continuous_bandits}
In its general form, a \emph{stochastic bandit} is a pair $(\mathcal{X}, M)$ of a measurable space $\mathcal{X}$ and a set of random variables $M$. Each element $x \in \mathcal{X}$ is associated with a random variable $M(x) \in M$. Playing arm $x\in \mathcal{X}$ corresponds to observing an i.i.d. sample from $M(x)$. In this work, we consider \emph{continuous bandits}, i.e., bandits with uncountably infinite set of arms $\mathcal{X} \subset \RR^d$, whose expected rewards are denoted by $\mu(x) = \E[M(x)]$. Typically, $\mu(x)$ is assumed to be $L$-Lipschitz and the rewards $M(x)$ to be either sub-Gaussian or restricted to a bounded domain~\cite{bubeck2011x}. Stochastic bandits describe a setting where the only way of accessing $\mu$ is to observe samples of the respective rewards. Therefore, the continuous bandit model is directly applicable to VQA, which is outlined in the upcoming section.

\subsection{Variational Quantum Optimization}
\label{sec:bp}
As illustrated in~\Cref{fig:vqc}, in variational quantum computing, the process begins by initializing the quantum system to an $n$-qubit state $\rho$, which is then passed through a \emph{parameterized quantum circuit} (PQC). Since all quantum circuits consist of unitary operations, a PQC can be expressed as a sequence of $p$ parametrized unitaries $U(x) = \prod_{i=1}^p U_i(x_i)$,
where $x = (x_1,...,x_L)$ is a set of trainable parameters. After applying $U(x)$, the resulting state is measured with an observable $O$, yielding an outcome $o_z$, an eigenvalue of $O$ associated with eigenvector $\ket{z}$. By the \emph{Born rule}, the probability of measuring $o_z$ is given by $\bra{z} \rho(x) \ket{z}$, with $\rho(x) = U(x) \rho U(x)^\dagger$. The expected outcome is given by $\tr[\rho(x) O]$, which leads to the loss function 
$\ell_{x}(\rho, O) = \tr[\rho(x) O]$. The expected loss is typically approximated by repeatedly running the circuit with the same parameters and computing the empirical average.\\ 
Reformulating this setup as measuring the initial state (possibly unknown) $\rho$ with an observable from the set $\{\tilde{O}(x)\}_{x}$ with $\tilde{O}(x) = U^\dagger(x) O U(x)$ can be described as \emph{multi-armed Quantum Bandits}~\cite{Lumbreras2022multiarmedquantum}. Since the initial state $\rho$ of a given PQC is known, optimizing a PQC reduces to a classical, continuous bandit problem. Explicitly, $\mu(x) = \tr[\rho(x) O]$ and 
\begin{equation}
    P(M(x) = y) = \begin{cases}
        \bra{z} \rho(x) \ket{z}  & \text{if } y = o_z,\\
        0 &\text{otherwise}.
    \end{cases}
\end{equation}
If the eigenvalues of $O$ lie between $0$ and $1$, so does the reward distribution. This is easily achieved by linearly transforming $O$. Furthermore, general Unitaries $U(x_k)$ can be expressed as $e^{-ix_k V_k}$, where $V_k$ is Hermitian with bounded spectral norm~\cite{Holmes_2022}. Therefore, $\mu$'s gradients are bounded by some constant $L$. Although our results only apply to $1$-d continuous bandits, they can be used for multidimensional continuous bandits, for example in combination with Powell's method~\cite{powell1964efficient}, which we will discuss in more detail in~\Cref{sec:algorithm}.

\subsection{Problem statement}

In~\Cref{sec:lower_bound} and~\Cref{sec:algorithm}, we consider a bandit problem $([0,1], M(x))$, which satisfies the following assumptions:

\begin{enumerate}[label=(\roman*)]
    \item The rewards $M(x)$ are $1$-sub-Gaussian.
    \item The expected reward $\mu(x)$ is $L$-Lipschitz.
    \item $\mu(x)$ has a unique optimum $x^*$. \label{ass:3}
    \item There is a constant $\kappa_0$, such that $\mu$ is unimodal on every set $E \subseteq [x_i, x_i + \kappa]$ for all $\kappa < \kappa_0$. \label{ass:4}
\end{enumerate}

The optimization algorithm aims to find an arm, which is close to the unique maximum\\
$x^* = \mathrm{argmax}_{x\in [0,1]} \mu(x)$ with high probability. Formally, we want the learner to recommend an arm $\hat{x}$, such that, given $\ep, \delta > 0$,
\begin{equation}\label{eq:pac_learner}
    \Pr(\lVert \hat{x} - x^* \rVert_2 > \ep) \leq \delta.
\end{equation}
An optimization algorithm that satisfies~\Cref{eq:pac_learner} is often referred to as $\delta$-\emph{PAC learner}. Our goal is to design a PAC-learner, which finds an $\ep$-optimal arm with probability at least $1-\delta$ after a minimal number of steps, which we refer to by  $\tau_{\delta}^{\ep}$. In general, this stopping time is random so that we quantify the sample complexity of the algorithm by $\E[\tau_{\delta}^{\ep}]$.
Note that assumptions~\ref{ass:3} and~\ref{ass:4} do not hold for general PQCs. In practice, this is not an issue, as these assumptions are not needed for our algorithm to find an arm with reward $\mu(x) \leq \mu(x^*) + \ep$.
\paragraph*{Notation}
In the following, we introduce some notation, which will occur throughout~\Cref{sec:lower_bound} and~\Cref{sec:algorithm}. All other definitions will be introduced in the respective subsections. In the upcoming sections, we will consider the equivalent minimization problem for convenience. Therefore, let $v(x) \triangleq \mu(x^*) - \mu(x)$, such that $v$ has its minimum where $\mu$ has its maximum and $v(x^*) = 0$. For clarity, we sometimes write $x^*(f) \triangleq \mathrm{argmax}_{x \in [0,1]} f(x)$ in order to refer to the maximizer of a specific function, which we omit for $\mu$ and $v$.\\
\begin{figure}
    \centering
    \includegraphics[width=0.5\textwidth]{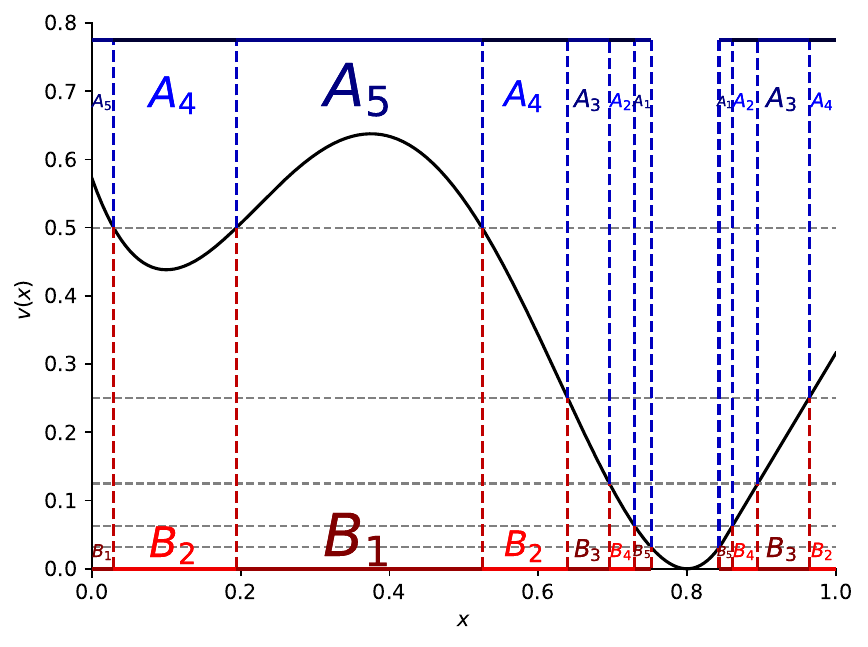}
    \caption{Example with $\ep=2^{-5}, \kappa_0=\frac{1}{4},S=3$.}
    \label{fig:sketch_sets}
\end{figure}
For convenience, we assume that $\epsilon = 2^{-D}$. Furthermore, we define the following level sets, i.e. sets of the form 
\begin{equation*}
    v^{-1}[a,b] \triangleq \{x \in [0,1] : a \leq v(x) \leq b\}.
\end{equation*}
Furthermore, we define $A_t = v^{-1}(2^{t-1}\epsilon, 2^t \epsilon]$ and \\
$B_t = v^{-1}(2^{-t}, 2^{-(t-1)}]$ for $t=1,\ldots,D$. Note that $A_t = B_{D-t}$. This is illustrated in \Cref{fig:sketch_sets}. We use the Lebesgue measure, which we denote by $m(\cdot)$, to express the ``length'' of the sets. \\ 
Finally, we introduce two additional definitions.
\begin{definition}[Covering-number]
    The \emph{covering number} $N_r(X)$ is the smallest number of sets with diameter $r$ required to cover a set $X$.
\end{definition}

The covering number gives rise to the \emph{zooming dimension}, a concept, which is used in related literature, such as~\cite{KleinbergSU19}.

\begin{definition}[Zooming-dimension]
    The \emph{zooming dimension} of an instance $\mu(x)$ is defined as the smallest $\beta$, for which there is a constant $C$, such that $N_{r/8}(X_r) = Cr^{-\beta}$ for every $1/2 > r>0$ and sets $X_r \triangleq v^{-1}(r, 2r]$.
\end{definition}

The zooming dimension is a convenient tool to write regret upper and lower bounds in concise form. Conceptually, it can be thought of as the limit of the ``flatness'' of $\mu$ around its maximum. In the $1$-d setting, $\beta$ takes values from $0$ to $1$.

\section{Lower bound}\label{sec:lower_bound}

In this section, we first establish an information-theoretic lower bound, following a similar approach to that of ~\cite{pmlr-v49-garivier16a, pmlr-v238-carlsson24a}. Deriving a scheme analogous to \emph{Track-and-Stop} directly from this bound is infeasible, as explicitly solving for $w$ is intractable. Therefore, we derive a simplified, instance dependent lower bound. The proofs of all theoretical results in this section are provided in~\Cref{sec:proofs_lb}.

\subsection{Continuous lower bound}

The methods to obtain a lower bound in the discrete, unstructured setting~\cite{pmlr-v49-garivier16a} can easily be adapted to the infinite-arm setting we consider. The main differences lie in the definition of the alternate set and the use of an integral instead of a sum, leading to a lower bound that closely resembles the one derived in the discrete setting. Denote the \emph{alternate set} by \begin{equation}
    \mathrm{Alt}^{\ep}(\mu) \triangleq \{ \lambda \ L\text{-Lipschitz}: \lVert x^*(\mu) - x^*(\lambda) \rVert > \ep \}.
\end{equation}
For convenience, we only consider sample strategies, which have Riemann-integrable probability density and denote the respective set by $\mathcal{W}$. In the following, we present an information-theoretic lower bound for continuous bandits.

\begin{theorem}\label{thm:exact_lower_bound}
    Let $\mu$ be a bandit continuous bandit on some set $\mathcal{X}$.
    For any $(\ep, \delta)$-PAC learner,  
    \begin{equation}
        \E[\tau_{\delta}^{\ep}] \geq \frac{\log(1/\delta)}{c^*(\mu)},
    \end{equation}
    where 
    \begin{equation}\label{eq:c_star_main}
        c^*(\mu) = \sup_{w \in \mathcal{W}} \inf_{\lambda \in \mathrm{Alt}^{\ep}(\mu)} \int_{\mathcal{X}} w(x) (\mu(x) - \lambda(x))^2 dx.
    \end{equation}
\end{theorem}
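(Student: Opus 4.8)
The plan is to prove the bound by a change-of-measure argument, adapting the transportation inequality of Garivier--Kaufmann~\cite{pmlr-v49-garivier16a} from a finite arm set to the continuum of arms, exactly as in~\cite{pmlr-v238-carlsson24a}. Fix the true instance $\mu$ and let $\lambda \in \mathrm{Alt}^\ep(\mu)$ be an arbitrary competing $L$-Lipschitz instance. Introduce the event $A \triangleq \{\lVert \hx - x^*(\mu)\rVert \le \ep\}$ that the learner's recommendation is $\ep$-close to the optimum of $\mu$. By the PAC guarantee~\eqref{eq:pac_learner} applied under $\mu$ we have $\Pro_\mu(A) \ge 1-\delta$, while applying it under $\lambda$ forces the recommendation near $x^*(\lambda)$; since $\lVert x^*(\mu) - x^*(\lambda)\rVert > \ep$ the two target balls are (essentially) disjoint, so $\Pro_\lambda(A) \le \delta$. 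This is precisely the role of the $\ep$-separation in the definition of $\mathrm{Alt}^\ep(\mu)$: any learner that is PAC on both instances must statistically distinguish them, which costs samples.

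The central tool is the data-processing (transportation) inequality. Let $N_\tau$ denote the random pull-count measure, so that $N_\tau(B)$ is the number of samples drawn from arms in a Borel set $B \subseteq \X$ up to the stopping time $\tau$, and write $\mathrm{KL}(x) \triangleq \mathrm{KL}\!\left(M_\mu(x), M_\lambda(x)\right)$ for the per-arm reward divergence. Decomposing the log-likelihood ratio of the observed trajectory under $\mu$ versus $\lambda$ and applying Wald's identity at the stopping time $\tau$ gives the continuous analogue of Garivier--Kaufmann's Lemma~1,
\begin{equation*}
    \E_\mu\!\left[\int_\X \mathrm{KL}(x)\, dN_\tau(x)\right] \ge \kl\!\left(\Pro_\mu(A),\, \Pro_\lambda(A)\right),
\end{equation*}
where the finite sum over arms is replaced by an integral against $N_\tau$ and $\kl(\cdot,\cdot)$ is the binary relative entropy.

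Next I would specialize both sides. Restricting to sampling strategies in $\mathcal{W}$ guarantees that the expected pull measure $B \mapsto \E_\mu[N_\tau(B)]$ is absolutely continuous with Riemann-integrable density; writing that density as $\E_\mu[\tau]\, w(x)$ with $w \in \mathcal{W}$ a probability density turns the left-hand side into $\E_\mu[\tau]\int_\X w(x)\,\mathrm{KL}(x)\,dx$. The $1$-sub-Gaussian assumption makes the Gaussian the extremal case, so $\mathrm{KL}(x) \le \tfrac12(\mu(x)-\lambda(x))^2$, while the PAC constraints give $\kl(\Pro_\mu(A),\Pro_\lambda(A)) \ge \kl(1-\delta,\delta) \ge \log(1/\delta)$. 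Rearranging,
\begin{equation*}
    \E_\mu[\tau] \ge \frac{\log(1/\delta)}{\int_\X w(x)\,(\mu(x)-\lambda(x))^2\,dx},
\end{equation*}
where the constant from the sub-Gaussian normalization has been folded into $c^*$. Since this holds for every $\lambda \in \mathrm{Alt}^\ep(\mu)$, I take the infimum over the alternate set in the denominator; and since the learner's induced density $w$ can be no more favorable than the best admissible one, taking the supremum over $w \in \mathcal{W}$ recovers exactly $c^*(\mu)$ from~\eqref{eq:c_star_main}, completing the bound.

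The main obstacle is making the continuum change of measure rigorous: defining the trajectory likelihood ratio and verifying that its expectation equals $\int_\X \mathrm{KL}(x)\, d\,\E_\mu[N_\tau](x)$ requires the expected pull measure to be absolutely continuous with a well-behaved density, which is exactly why the admissible sampling strategies are restricted to $\mathcal{W}$ --- without this the integral defining $c^*(\mu)$ need not even be meaningful. Secondary subtleties are the geometric calibration ensuring the recommendation events under $\mu$ and $\lambda$ are disjoint for $\lambda\in\mathrm{Alt}^\ep(\mu)$, and confirming under assumptions~\ref{ass:3}--\ref{ass:4} that $\mathrm{Alt}^\ep(\mu)$ is nonempty so that the infimum over the infinite-dimensional family of $L$-Lipschitz alternates is well posed.
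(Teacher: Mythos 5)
Your proposal follows essentially the same route as the paper's proof: a Garivier--Kaufmann-style change of measure via the data-processing inequality (Kaufmann's Lemma), a chain-rule decomposition of the trajectory log-likelihood into per-arm KL terms integrated against the learner's expected sampling density, the sub-Gaussian relaxation of the KL to the squared mean gap, and finally the infimum over $\mathrm{Alt}^\ep(\mu)$ and supremum over $w\in\mathcal{W}$. The minor presentational differences (invoking Wald's identity explicitly, and the constant in $\kl(1-\delta,\delta)\ge\log(1/\delta)$, which the paper glosses over in the same way) do not change the argument.
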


The proof of~\Cref{thm:exact_lower_bound} can be extended to alternative definitions of $\mathrm{Alt}^{\ep}(\mu)$ and holds for a general domain $\mathcal{X}$. It is a consequence of information theoretic properties of a PAC learner. Analogously to its discrete counterpart~\cite{NEURIPS2019_8d1de745},~\Cref{eq:c_star_main} has a game-theoretic interpretation: the $w$-player maximizes the value in~\Cref{eq:c_star_main}, to which the $\lambda$-player responds with $\lambda$ that minimizes the objective for the given $w$. This perspective will allow us to derive a more tractable lower bound, as we will see in~\Cref{sec:discrete_lower_bound}.
Our choice of alternate set, i.e. restricting the location of the maximum of the confusing instance $\lambda$ seems like the most natural extension of its discrete counterpart. However, in some cases, the objective may be to identify $x$, such that $\mu(x^*) - \mu(x) \leq \ep$. Consequently, one may consider alternate instances where $\max_x \mu(x) - \max_x \lambda(x) > \ep$. Though, this choice of alternate set does not properly reflect the optimization task and results in a strictly weaker, instance-independent lower bound.

\begin{corollary}\label{cor:trivial_lower_bound}
Let $\mathcal{L}$ be the space of $1$-Lipschitz functions on $\mathcal{X}$ and
\begin{equation}
    \mathrm{Alt}^{\ep}(\mu) \triangleq \{\lambda \in \mathcal{L}:\max_x \mu(x) - \max_x \lambda(x) > \ep\}.
\end{equation} Then, 
    \begin{equation}
        c^*(\mu) = \ep^{2}.
    \end{equation}
\end{corollary}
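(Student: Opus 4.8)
The plan is to establish $c^*(\mu)=\ep^2$ by proving the two matching inequalities $c^*(\mu)\le\ep^2$ and $c^*(\mu)\ge\ep^2$, using the game-theoretic reading of~\Cref{eq:c_star_main} in which the $w$-player chooses a sampling density and the $\lambda$-player responds with a cheapest confusing instance. The essential observation is that this alternate set constrains only the \emph{value} of the global maximum of $\lambda$ (it must drop below $\max_x\mu(x)-\ep$) and places no constraint on \emph{where} that maximum sits, so the $\lambda$-player is free to modify $\mu$ only where $\mu$ is large.

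For the upper bound I would fix an arbitrary $w\in\mathcal{W}$ and exhibit a cheap response. Consider the uniform downshift $\lambda_c\triangleq\mu-c$ for $c>\ep$: it is $1$-Lipschitz because $\mu$ is, and it lies in $\mathrm{Alt}^{\ep}(\mu)$ since $\max_x\mu(x)-\max_x\lambda_c(x)=c>\ep$. For this response $\int_{\mathcal{X}}w(x)(\mu(x)-\lambda_c(x))^2\,dx=c^2\int_{\mathcal{X}}w=c^2$, and letting $c\downarrow\ep$ shows $\inf_{\lambda\in\mathrm{Alt}^\ep(\mu)}\int w(\mu-\lambda)^2\le\ep^2$ for every $w$. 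Taking the supremum over $w$ gives $c^*(\mu)\le\ep^2$.

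For the matching lower bound I would let the $w$-player concentrate near the peak. Take $w_\eta$ to be the uniform density on $[x^*-\eta,x^*+\eta]$, which is Riemann-integrable and hence in $\mathcal{W}$. Any admissible $\lambda$ satisfies $\max_x\lambda(x)<\mu(x^*)-\ep$, so $\lambda(x)<\mu(x^*)-\ep$ for all $x$; meanwhile $1$-Lipschitzness of $\mu$ gives $\mu(x)\ge\mu(x^*)-\eta$ on the support of $w_\eta$. Hence $\mu(x)-\lambda(x)>\ep-\eta$ there, so $\int w_\eta(\mu-\lambda)^2\ge(\ep-\eta)^2$ uniformly in $\lambda$, i.e. $\inf_{\lambda}\int w_\eta(\mu-\lambda)^2\ge(\ep-\eta)^2$. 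Letting $\eta\downarrow0$ yields $c^*(\mu)\ge\ep^2$, and combining the two inequalities gives the claim.

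The main obstacle is the lower bound: the $\lambda$-player may in principle confuse the learner by perturbing $\mu$ only on a tiny set, so one cannot simply evaluate a single confusing instance. The argument above circumvents this by having the $w$-player sample exactly where \emph{every} admissible $\lambda$ must deviate---arbitrarily close to $x^*$---and by invoking Lipschitz continuity to guarantee that $\mu$ is still within $\eta$ of its maximum there; the passage to genuine densities rather than a point mass at $x^*$ is what forces the $\eta\to0$ limit. This instance-\emph{independent} value $\ep^2$ is precisely what makes this alternate set too weak, motivating the definition adopted in~\Cref{thm:exact_lower_bound}.
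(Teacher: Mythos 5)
Your proposal is correct and follows the same two-sided argument as the paper: the upper bound via a uniform downshift of $\mu$ by slightly more than $\ep$ (the paper phrases this as choosing $\lambda$ with $(\lambda(x)-\mu(x))^2\le(\ep+\eta)^2$ pointwise, which is exactly your $\lambda_c$), and the lower bound by concentrating the sampling density at the peak, where every admissible $\lambda$ must sit at least $\ep$ below $\mu$. The one substantive difference is in the lower bound: the paper plays $w=\delta_{x^*}$, a point mass at $x^*$, whereas you use a genuine uniform density on $[x^*-\eta,x^*+\eta]$ and invoke Lipschitz continuity of $\mu$ to lose only an $O(\eta)$ term before sending $\eta\to0$. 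Your version is actually the more careful one, since $\delta_{x^*}$ is not a Riemann-integrable probability density and so does not belong to the class $\mathcal{W}$ over which the supremum in~\Cref{eq:c_star_main} is taken; your shrinking-window argument is the standard way to make the paper's point-mass heuristic rigorous, at the cost of one extra limit. (Both arguments implicitly require $\mu$ itself to be $1$-Lipschitz so that the downshifted instance lies in $\mathcal{L}$, an assumption the corollary leaves tacit.)
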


This result supports our previous definition of the alternate set.

\subsection{Simplified lower bound}\label{sec:discrete_lower_bound}

In the following, we present an upper bound for $c^*(\mu) = c^*(v)$,
which serves as a lower bound for $\E[\tau_{\delta}^{\ep}]$. We consider minimizing $v$ for convenience and refer to $v'(x) = \lambda(x^*(\mu))-\lambda(x)$ as the corresponding quantity for the alternate instance, denoted by $\lambda$ in \Cref{eq:c_star_main}.
One can interpret $c^*(v)$ as a game, where two players compete against each other in alternating rounds: one player tries to minimize the expression by assigning values to $v'$, while the other one tries to maximize it by applying values to $w$. 
In the beginning, the $w$-player may try to assign $v'$, such that it equals $v$ everywhere except in a small region around the minimum, ensuring that $v'\in \mathrm{Alt}^\ep(v)$. The $w$-player's approximately best response is to assign $w \equiv 0$ where $v'=v$ and uniform where $v' \neq v$. The $v'$-player may now be unable to play any $v'\in \mathrm{Alt}(v)$ which increases the objective, implying that the objective is close to $c^*$. However, $v$ may be `flat' in a certain area, where currently $w \equiv 0$. If this area is large enough, one can play a $1$-Lipschitz $v'\in \mathrm{Alt}^\ep(v)$, which attains value $-\ep$ in that area and equals $v$ in the remaining parts of the domain, yielding objective value $0$. To prevent the $v'$-player from applying this strategy in other admissible parts of the domain, the $w$-player needs to respond with the uniform distribution over all such `flat' areas (properly reweighed, such that none of them is more advantageous than others). 
This gives rise to our second result, the discrete  lower bound. 

\begin{theorem}\label{thm:approx_lower_bound}
    Let $v:[0,1]\rightarrow \RR$ satisfy the assumptions from \Cref{sec:preliminaries}. Then, 
    \begin{equation}
        \E[\tau_{\delta}^{\ep}]
    \geq \frac{\log(1/\delta)}{80\ep^3/L}\sum_{t=1}^{D}\frac{m(B_t)}{8^{D-t}}.
    \end{equation}
\end{theorem}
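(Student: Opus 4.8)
The plan is to invoke \Cref{thm:exact_lower_bound}, which reduces the statement to an \emph{upper} bound on $c^*(v)=\sup_{w\in\mathcal W}\inf_{\lambda\in\mathrm{Alt}^\ep(v)}\int_0^1 w(x)\,(\mu(x)-\lambda(x))^2\,dx$. Concretely I aim to show $c^*(v)\le 80\,/\,\bigl(L\sum_{t=1}^{D}8^{t}m(B_t)\bigr)$; since $8^{D}=\ep^{-3}$ gives $8^{t}=\ep^{-3}8^{-(D-t)}$, plugging this into $\E[\tau_\delta^\ep]\ge\log(1/\delta)/c^*(v)$ reproduces the claimed inequality verbatim. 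The only nontrivial direction is bounding a $\sup$--$\inf$ from above, which I do by handing the $\lambda$-player (the confusing instance) an explicit cheap response for every fixed density $w$, and then optimising the resulting bound over $w$.

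Construction of the confusing instance is the core step, realising the heuristic of \Cref{sec:discrete_lower_bound} in the minimisation picture. Fix a band $B_t$ (on which $v\in(2^{-t},2^{1-t}]$) and a point $x_0\in B_t$ with $\|x_0-x^*\|>\ep$, and let $g(x)=-\ep+L|x-x_0|$ be the V-shaped profile of slope $\pm L$ bottoming at $-\ep$. Setting $v'=\min(v,g)$ keeps $v'$ $L$-Lipschitz, equal to $v$ outside a short interval $I=\{g<v\}$ around $x_0$, and creates a new strict global minimum $v'(x_0)=-\ep<0$, so the corresponding $\lambda=\mu(x^*)-v'$ lies in $\mathrm{Alt}^\ep(v)$ and satisfies $\mu-\lambda=v'-v$. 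Because the dip has depth $\Theta(2^{-t})$ and $v'$ is $L$-Lipschitz, $I$ has width $\ell_t=\Theta(2^{-t}/L)$, and on $I$ we have $(\mu-\lambda)^2=(v-v')^2\le(2^{1-t}+\ep)^2=\Theta(4^{-t})$, so the cost is $\int_I w\,(v-v')^2\le\Theta(4^{-t})\int_I w$. The $\lambda$-player is free to slide $I$ to the emptiest location: tiling $B_t$ by $N_t=\Theta(m(B_t)/\ell_t)=\Theta(L\,2^{t}m(B_t))$ disjoint candidate intervals and averaging yields, by pigeonhole, a placement with $\int_I w\le W_t/N_t$ where $W_t=\int_{B_t}w$. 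Hence $\inf_\lambda(\cdots)\le\min_t\Theta(4^{-t})\,W_t/N_t=\Theta(1)\cdot\min_t W_t/(L\,8^{t}m(B_t))$.

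It remains to take $\sup_w$. Since $\sum_t W_t\le\int_0^1 w=1$, the value $\min_t W_t/(L\,8^{t}m(B_t))$ is largest when the ratios are equalised, i.e. $W_t\propto 8^{t}m(B_t)$, giving $\sup_w\min_t W_t/(L\,8^{t}m(B_t))=1/\bigl(L\sum_t 8^{t}m(B_t)\bigr)$. Carefully tracking the constants suppressed in the dip width $\ell_t$, in the inequality $(2^{1-t}+\ep)^2\le c\,4^{-t}$, and in the tiling efficiency $N_t$ should collect into the stated factor $80$, after which the substitution $8^{t}=\ep^{-3}8^{-(D-t)}$ turns $c^*(v)\le 80/(L\sum_t 8^{t}m(B_t))$ into the theorem.

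The main obstacle is making the dip construction and the pigeonhole simultaneously rigorous. Two geometric facts must be secured. First, the mass accounting $\sum_{\text{tiles}}\int_I w\le W_t$ requires each candidate $I$ to sit inside a controlled neighbourhood of $B_t$; this is exactly where assumption~\ref{ass:4} is used, since local unimodality forces $B_t$ to be a union of boundedly many monotone arcs, each of width $\gtrsim 2^{-t}/L$ by the Lipschitz lower bound $v(x)\le L\|x-x^*\|$, so that $N_t\ge 1$ and the tiling is valid. Second, the admissibility $\|x_0-x^*\|>\ep$ of the chosen peak follows from $v(x_0)>2^{-t}$ together with $v(x_0)\le L\|x_0-x^*\|$, hence $\|x_0-x^*\|>2^{-t}/L$, which exceeds $\ep$ for every band except those closest to $x^*$; the few exceptional bands contribute only a controlled correction absorbed into the constant. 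Forcing all these constants to combine into precisely $80/(\ep^3/L)$ is the delicate bookkeeping at the end of the argument.
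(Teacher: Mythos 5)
Your overall strategy coincides with the paper's: reduce to an upper bound on $c^*(v)$ via \Cref{thm:exact_lower_bound}, hand the $\lambda$-player a wedge of slope $\pm L$ dipping to $-\ep$ inside a single band, control its cost by an averaging/pigeonhole argument over placements within that band, and equalise the $w$-mass across bands to obtain $1/(L\sum_t 8^t m(B_t))$. This is exactly what \Cref{lem:first_approx_bound} and the explicit optimal weights $w^*_s$ do.

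There is, however, a genuine gap in the construction $v'=\min(v,g)$ with $g(x)=-\ep+L|x-x_0|$: the claim that $I=\{g<v\}$ has width $\Theta(2^{-t}/L)$ is false in general. Since $I=\{x: L|x-x_0|<v(x)+\ep\}$ and $v$ may itself grow at rate up to $L$ away from $x_0$, the wedge need never catch up with $v$. Concretely, for $v(x)=L|x-x^*|$ and $x_0>x^*$ one has $v-g\equiv L(x_0-x^*)+\ep>0$ on all of $(x_0,1]$, so $I$ reaches the domain boundary and sweeps through every band. Your pigeonhole step $\int_I w\le W_t/N_t$ needs $I$ to sit inside a single tile of $B_t$; once $I$ leaks, $\int_I w$ can be $\Theta(1)$ and the per-band cost degrades from $4^{-t}W_t/N_t$ to $4^{-t}$, recovering only the trivial $\Theta(\ep^2)$ bound rather than $\Theta(1/(L\sum_t 8^t m(B_t)))$. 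The dip stays confined exactly when the interval of the band containing $x_0$ has length at least about $\frac{2}{L}(2^{1-t}+\ep)$, which is roughly four times the width $2^{-t}/L$ that Lipschitz continuity guarantees, so your ``each arc has width $\gtrsim 2^{-t}/L$'' observation does not close the hole; nor does your remark about exceptional bands, which only concerns the admissibility $|x_0-x^*|>\ep$. The paper deals with precisely this issue by restricting the wedge to the feasible level sets $R(v)$ (those long enough for the wedge to match $v$ at its endpoints) and then proving the dichotomy of \Cref{lem:1d_assum}: either the feasible bands carry at least half of $\sum_t m(A_t)/(2^t+2)^3$, or that sum is at most $\ep/4$, in which case the near-constant instance $v_0$ already certifies $c^*\le 16\ep^2$ and the stated bound follows anyway. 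Your argument needs this case analysis (or an equivalent truncation of the dip with a separate treatment of short bands) before the constants can be assembled.
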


\Cref{thm:approx_lower_bound} is still instance dependent and holds for any $\kappa_0 > \ep > 0$. When $\ep \rightarrow 0$, one can show that the instance dependence reduces to the \emph{zooming dimension} of $\mu$.

\begin{corollary}\label{cor:approx_lower_bound}
    Let $v:[0,1]\rightarrow \RR$ with zooming dimension $\beta$ satisfy the assumptions from \Cref{sec:preliminaries}. 
    Then, 
    \begin{equation}
        \E[\tau_{\delta}^{\ep}]
    \geq \boldsymbol{\Theta}(\log(1/\delta)\ep^{-2 + \beta}),
    \end{equation}
    when $\ep \rightarrow 0$.
\end{corollary}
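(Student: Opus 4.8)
The plan is to read \Cref{cor:approx_lower_bound} as an asymptotic evaluation, as $\ep = 2^{-D}\to 0$, of the instance-dependent lower bound in \Cref{thm:approx_lower_bound}. The only $\ep$-dependence there lives in the prefactor $L/(80\ep^{3})$ and in the weighted sum $S \triangleq \sum_{t=1}^{D} m(B_t)\,8^{-(D-t)}$, so it suffices to pin down the order of $S$ and then multiply back in the prefactor and the $\log(1/\delta)$ factor. The bridge to the zooming dimension is that the measures $m(B_t)$ must be rewritten through the covering numbers that define $\beta$.

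First I would convert the zooming hypothesis into two-sided control on $m(B_t)$. Choosing $r = 2^{-t}$ gives $X_r = v^{-1}(r,2r] = B_t$, so by definition of the zooming dimension $N_{2^{-t}/8}(B_t) = \Theta(2^{t\beta})$; write $\rho = 2^{-t}/8$ for the covering radius. Covering $B_t$ by $N_\rho(B_t)$ intervals of length $\rho$ gives the easy upper estimate $m(B_t) \le \rho\,N_\rho(B_t) = O(2^{t(\beta-1)})$. The matching lower estimate is the crux and the step I expect to be the main obstacle: a set with large covering number can a priori have small measure, so I must prevent $B_t$ from being a union of many tiny pieces. Writing $N_\rho(B_t) = \sum_i \lceil \ell_i/\rho\rceil \le m(B_t)/\rho + P_t$ over the $P_t$ connected components of $B_t$, it is enough to show that $P_t$ is negligible against $N_\rho(B_t)$. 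Here assumption~\ref{ass:4} (local unimodality) does the work: on every window shorter than $\kappa_0$ the function $v$ is unimodal, so the band set $\{2^{-t} < v \le 2^{-(t-1)}\}$ meets each such window in at most two intervals; partitioning $[0,1]$ into $O(1/\kappa_0)$ windows yields $P_t = O(1/\kappa_0) = O(1)$. Since $N_\rho(B_t) = \Theta(2^{t\beta})\to\infty$, the term $P_t$ is lower order, whence $m(B_t)/\rho \ge N_\rho(B_t) - P_t = \Theta(2^{t\beta})$ and therefore $m(B_t) = \Theta(2^{t(\beta-1)})$. (The $L$-Lipschitz assumption (ii) can alternatively be used to lower bound each full band-crossing component by $2^{-t}/L$, giving the same conclusion.)

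With $m(B_t) = \Theta(2^{t(\beta-1)})$ the remainder is a geometric sum. Each summand satisfies $m(B_t)\,8^{-(D-t)} = \Theta\!\big(2^{t(\beta+2)-3D}\big)$, and since $\beta + 2 > 0$ the common ratio $2^{\beta+2}$ exceeds one, so the sum is dominated by its last term at $t=D$. This gives $S = \Theta(2^{D(\beta-1)})$, and substituting $2^{D} = 1/\ep$ yields $S = \Theta(\ep^{\,1-\beta})$.

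Finally I would assemble the pieces: multiplying $S = \Theta(\ep^{1-\beta})$ by the prefactor of order $\ep^{-3}$ and by $\log(1/\delta)$ collapses the bound of \Cref{thm:approx_lower_bound} to $\boldsymbol\Theta\!\big(\log(1/\delta)\,\ep^{1-\beta}\ep^{-3}\big)$ as $\ep\to 0$, i.e.\ the single polynomial $\ep$-rate asserted in \Cref{cor:approx_lower_bound}. Note that the $\boldsymbol\Theta$ here quantifies the order of the lower-bound \emph{expression}, so it relies on the two-sided estimate of $S$ from the second step — both the trivial upper estimate on $m(B_t)$ and the unimodality-based lower estimate are needed — and I only ever invoke the zooming relation at the dyadic scales $r = 2^{-t}$, so no uniformity beyond these scales is required.
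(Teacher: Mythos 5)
Your argument is correct and rests on the same two pillars as the paper's own proof: (local) unimodality caps the number of connected components of each band $B_t$, so covering numbers and Lebesgue measures are interchangeable up to the covering radius, and the weight $8^{-(D-t)}$ makes the sum localize at the finest scale $t=D$. The execution differs in two ways. First, you establish the two-sided per-band asymptotic $m(B_t)=\Theta(2^{t(\beta-1)})$ and sum the geometric series, whereas the paper discards all but the last term and only lower-bounds it, via $m(B_{D-1}\cup B_D)\gtrsim 2^{-D}N_{2^{-D}}(B_{D-1}\cup B_D)$ and the inclusion $X_r\subset B_{D-1}\cup B_D$; for a lower bound on $\E[\tau_{\delta}^{\ep}]$ only your last-term lower estimate is needed, but the two-sided version is what honestly justifies the $\boldsymbol{\Theta}$ and essentially reproduces the computation the paper defers to \Cref{cor:algo_complexity}. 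Second, you invoke the zooming relation with equality at the dyadic scales $r=2^{-t}$ (where $X_{2^{-t}}=B_t$ exactly), which the printed definition literally licenses; the paper instead introduces an auxiliary scale $\eta(\ep)$ to extract a covering-number lower bound near $\ep$. Be aware that under the more standard one-sided reading ($N_{r/8}(X_r)\le Cr^{-\beta}$ with $\beta$ minimal), your lower estimate on $N_{2^{-t}/8}(B_t)$ would only be guaranteed along a subsequence of scales and for exponents arbitrarily close to $\beta$ — precisely the subtlety the paper's $\eta(\ep)$ device is aimed at — though this does not change the asymptotic rate. Your edge-case remark (using the Lipschitz property to give each full band-crossing component length at least $2^{-t}/L$ when $N_{\rho}(B_t)$ stays bounded, e.g.\ for $\beta=0$) is the right patch for the step where $P_t$ is declared lower order. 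Finally, your computation yields $\ep^{1-\beta}\cdot\ep^{-3}=\ep^{-(2+\beta)}$, which agrees with the paper's own proof and with the matching upper bound in \Cref{cor:algo_complexity}; the exponent $-2+\beta$ in the printed statement of \Cref{cor:approx_lower_bound} is therefore a sign typo for $-(2+\beta)$, which your derivation silently corrects.
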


Theorem 2 from~\cite{bull2015adaptive} states that for any strategy for tree-armed bandits, the regret $r_T \gtrsim T^{-1/(\beta + 2)}$ at large enough round $T$. When setting $\ep \geq r_T$ and solving for $T$, one can observe that this coincides with~\Cref{cor:approx_lower_bound}. Note however that our result holds for any strategy.


\section{Algorithm}\label{sec:algorithm}

\begin{figure*}
    \centering
    \includegraphics[width=0.8\linewidth]{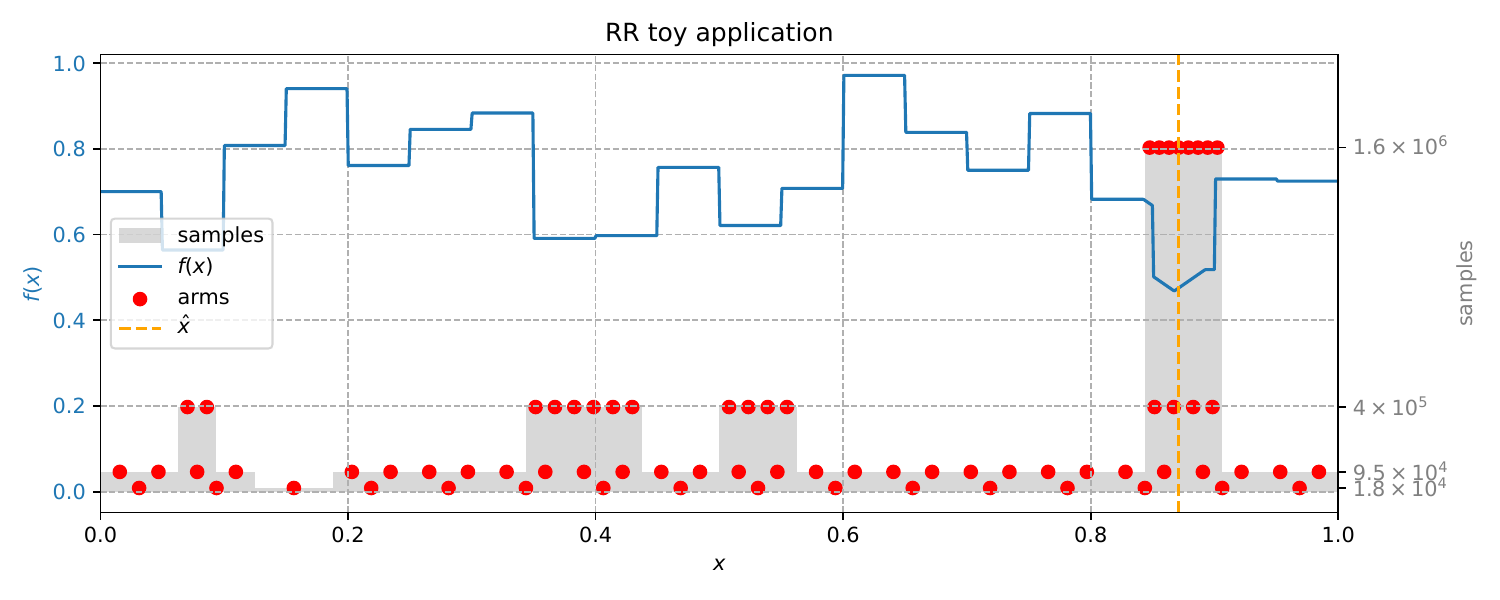}
    \caption{Toy function \textbf{(blue)} and example run of~\Cref{alg:rr} with $D$ rounds. The red dots indicate the \emph{arms} sampled from and the gray area the number of samples for each arm. The minimum estimated by the algorithm $\hat{x}$ is depicted by the orange dashed line.}
    \label{fig:toy_function}
\end{figure*}

In this section, we introduce a simple algorithm which matches the lower bound of the previous section up to a logarithmic factor. We briefly outline the algorithm and provide a convergence analysis. Finally, we outline a simple extension to multi-dimensional domains. The proofs of the theoretical results can be found in~\Cref{sec:proofs_algo}.

\subsection{Algorithm outline}
The algorithm follows the following procedure: In each round $1\leq t \leq D$, it draws sample of fixed size from points $x$ on a uniform grid on $[0,1]$. Then, we construct confidence intervals around $\E[v(x)]$ which, in combination with the Lipschitz property, allow us to exclude $x$ and their neighbourhood, when the estimated mean of $v(x)$ is below some threshold. The number of grid points is doubled after each step, and we only consider points, which do not belong to the neighbourhood of excluded points. We stop when all points, except for an interval of length at most $\ep$, have been excluded. This is illustrated in~\Cref{fig:toy_function}.\\

In the following, denote the set of grid points in round $t$ by
\begin{equation}
    H_{t} = \left\{ \frac{1}{\lceil L \rceil}\left( \frac{k}{2^{t+3}} - \frac{1}{2^{t+4}} \right) \middle| 1 \leq k \leq \lceil L \rceil \cdot 2^{t+3} \right\}.
\end{equation}
Furthermore, let $E_t \subseteq [0,1]$ be the parts of the domain exclude in the end of round $t$ and $G_t$ the parts of the domain, which are left in the beginning of round $t$, such that $G_{t} = G_{t-1} \setminus E_t$. These quantities give rise to~\Cref{alg:rr}. 

\begin{algorithm}
    \SetCustomAlgoRuledWidth{0.45\textwidth}
   \caption{Reject and Refine (RR)}
   \label{alg:rr}
\begin{algorithmic}
   \STATE {\bfseries Input:} Inverted bandit $v(\cdot)$, constants $L, \ep$.
   \STATE Initialize $G_0 = [0,1]$, $t=1$.
   \REPEAT
   \FORALL{$h \in G_{t-1} \cap H_t$ \do}
   \STATE Draw $n_t$ samples from $v(h)$.
   \STATE Compute $\hat{v}(h)$
   \STATE Construct $1 - \frac{\delta}{|H_t|2^t}$ CI of length $\frac{1}{2^{t+3}}$.
   \STATE $a_t^* \leftarrow \mathrm{argmin}_{h \in H_1}\hat{v}(h)$
   \STATE $E_t \leftarrow \bigcup_{h: \hat{v}(h) - \hat{v}(a_t^*) > \frac{12}{2^{t+4}}} \left[h-\frac{1}{2^{t+4}}, h+\frac{1}{2^{t+4}}\right]$
   \STATE $G_{t} \leftarrow G_{t-1} \setminus E_t$
   \STATE $t \leftarrow t+1$
   \ENDFOR   
   \UNTIL{$2^{-t} \leq \ep$}
   \STATE {\bfseries Output:} $a^* = \mathrm{argmin}_{a^*_t}\hat{v}(a_t^*)$.
\end{algorithmic}
\end{algorithm}

The runtime of~\Cref{alg:rr} adheres to the following worst case-guarantee. 

\begin{theorem}\label{thm:rr_upper_bound}
    Let $v:[0,1]\rightarrow \RR$ satisfy the assumptions from \Cref{sec:preliminaries}. Then~\Cref{alg:rr} terminates after $D\triangleq\log_2(1/\ep)$ rounds and the number of samples is bounded by
    \begin{equation}\label{eq:upper_bound}
        \tau_{\delta}^{\ep} \leq 2^{15}L\frac{\log_2(1/\ep)+\log(1/\delta)}{\ep^3}\sum_{t=1}^D \frac{m(B_t)}{8^{D-t}}.
    \end{equation}
\end{theorem}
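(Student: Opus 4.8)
The plan is to write the total sample count as $\tau_\delta^\ep=\sum_{t=1}^{D}n_t\,|G_{t-1}\cap H_t|$, bound the two factors $n_t$ (samples per active point) and $|G_{t-1}\cap H_t|$ (number of active points) separately, and then resum. Termination is immediate: the loop exits once $2^{-t}\le\ep=2^{-D}$, which first happens at $t=D$, so there are exactly $D$ rounds.

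For $n_t$, since the rewards are $1$-sub-Gaussian, a confidence interval of half-width $2^{-(t+4)}$ at confidence $1-\delta/(|H_t|2^t)$ needs $n_t=O\big(4^{t}(\log(1/\delta)+\log|H_t|+t)\big)$ samples by the standard sub-Gaussian bound $\Pr(|\hat v(h)-v(h)|>w)\le 2e^{-n_t w^2/2}$. The confidence level $\delta/(|H_t|2^t)$ is engineered so that a union bound over all grid points and all $D$ rounds caps the total failure probability at $\delta$. I would work on this good event $\mathcal{G}$, on which every $\hat v(h)$ lies within $2^{-(t+4)}$ of $v(h)$; since $\Pr(\mathcal{G})\ge 1-\delta$ this is the relevant regime for a $\delta$-PAC learner (off $\mathcal{G}$ one still has the trivial bound $|G_{t-1}\cap H_t|\le|H_t|$). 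Using $|H_t|=\lceil L\rceil 2^{t+3}$ and $t\le D=\log_2(1/\ep)$, the logarithmic factor collapses to $O(\log(1/\ep)+\log(1/\delta))$.

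The crux is bounding the number of active arms $|G_{t-1}\cap H_t|$. On $\mathcal{G}$ I would first argue that the exclusion rule is both safe and aggressive: comparing $\hat v(h)$ to $\hat v(a^*_t)$ and using that the grid point nearest $x^*$ has $v\le 2^{-(t+4)}$ (Lipschitzness together with the $\lceil L\rceil$-scaled spacing), one gets that any excluded $h$ satisfies $v(h)\gtrsim 2^{-t}$ while any $h$ with $v(h)\gtrsim 2^{-t}$ is excluded, and the neighbourhood radius is chosen (again via Lipschitzness) exactly so that deleting the cell around such an $h$ cannot remove $x^*$. Consequently $G_{t-1}\subseteq v^{-1}[0,C2^{-(t-1)}]$ for an absolute constant $C$, whence $m(G_{t-1})\le\sum_{s\ge t-O(1)}m(B_s)$. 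Counting grid points of spacing $1/(\lceil L\rceil 2^{t+3})$ falling in $G_{t-1}$ then gives $|G_{t-1}\cap H_t|=O\big(\lceil L\rceil 2^{t}\,m(G_{t-1})\big)$, the measure term dominating the $O(1)$-per-interval boundary corrections.

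Combining the three estimates yields, per round, $n_t|G_{t-1}\cap H_t|=O\big(\lceil L\rceil\,8^{t}(\log(1/\ep)+\log(1/\delta))\,m(G_{t-1})\big)$. Writing $8^{t}=8^{D}8^{-(D-t)}=\ep^{-3}8^{-(D-t)}$ and inserting the level-set bound for $m(G_{t-1})$, the final step is a summation swap: $\sum_t 8^{-(D-t)}\sum_{s\ge t-O(1)}m(B_s)=\sum_s m(B_s)\sum_{t\le s+O(1)}8^{-(D-t)}$, where the inner geometric sum is $O(8^{-(D-s)})$ since it is dominated by its largest term. This reproduces $\ep^{-3}\sum_{t}m(B_t)/8^{D-t}$ with an absolute constant, which one checks is at most $2^{15}L$ after tracking the explicit factors. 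I expect the main obstacle to be the level-set containment $G_{t-1}\subseteq v^{-1}[0,C2^{-(t-1)}]$: it requires simultaneously verifying that the exclusion threshold $12\cdot 2^{-(t+4)}$, the CI half-width, the grid spacing, and the Lipschitz constant are mutually consistent, so that the optimum is never discarded (safety) yet every sufficiently suboptimal cell is (progress). This is the only place where assumptions (iii)--(iv) and the $\lceil L\rceil$-scaling of the grid are genuinely used.
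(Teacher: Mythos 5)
Your proposal is correct and follows essentially the same route as the paper's proof: the per-round decomposition into (samples per active arm) $\times$ (number of active grid points), the sub-Gaussian confidence intervals with the $\delta/(|H_t|2^t)$ union-bound budget, the inductive level-set containment $B_{\geq t+2}\subseteq G_t\subseteq B_{\geq t+1}$ (your $G_{t-1}\subseteq v^{-1}[0,C2^{-(t-1)}]$), and the final swap of summation order with a geometric inner sum. You also correctly identify the one delicate point the paper spends its effort on, namely verifying that the exclusion threshold, CI half-width, grid spacing, and Lipschitz constant are jointly consistent (including the case where the witnessing grid point was already excluded in an earlier round).
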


The bound given in~\Cref{thm:rr_upper_bound} matches the lower bound from~\Cref{thm:approx_lower_bound} up to constant and $\log(\ep^{-1})$ factors, indicating that~\Cref{alg:rr} is nearly optimal. Note that it holds for any $\ep$ irrespective of $v$ being unimodal, allowing the final set $G_D \cap H_{D}$ to contain multiple arms with $v(a) \leq \ep$. This makes~\Cref{alg:rr} fundamentally different from running discrete best arm identification algorithms, as Frank-Wolfe sampling~\cite{wang2021fast}, on a sufficiently fine partition of $[0,1]$. Such a partition may contain multiple best arms, which would hinder methods of this kind from terminating.\\ 

Note that each refinement step comes with a cost: in the best case, one can always exclude half of the remaining domain, so that the number of arms remains constant. In the worst case, the number of arms doubles in each round. Since the confidence intervals of subsequent rounds require four times as many samples, the sample complexity increases by at least a factor four and at most eight. \\

The following result clarifies this
and shows that the asymptotic performance of~\Cref{alg:rr} is at least as good \emph{Adaptive-treed bandits}.

\begin{corollary}\label{cor:algo_complexity}
    Consider $v:[0,1]\rightarrow \RR$, which satisfies the assumptions from \Cref{sec:preliminaries} and has \emph{zooming dimension} $\beta$. Then
    \begin{equation}
        \tau_{\delta}^{\ep} = \mathcal{O}\left((\log{1/\delta} + \log{1/\ep})\ep^{-(\beta + 2)}\right)
    \end{equation}
    and when $\ep \rightarrow 0$, $\E[\tau_{\delta}^{\ep}]$ matches the lower bound from~\Cref{cor:approx_lower_bound}.
\end{corollary}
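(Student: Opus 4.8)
The plan is to treat \Cref{thm:rr_upper_bound} as a black box and reduce the whole corollary to a single estimate of the sum $S \triangleq \sum_{t=1}^{D} m(B_t)/8^{D-t}$ that appears there. The crucial observation is that both the algorithmic upper bound of \Cref{thm:rr_upper_bound} and the information-theoretic lower bound of \Cref{thm:approx_lower_bound} carry \emph{exactly} this same sum, with prefactors that differ only by a constant and a $\log(1/\ep)$ factor. Hence, once I show that $S = \boldsymbol{\Theta}(\ep^{1-\beta})$, substituting into \Cref{thm:rr_upper_bound} yields the claimed $\ep^{-(\beta+2)}$ rate, while substituting the identical estimate into \Cref{thm:approx_lower_bound} reproduces \Cref{cor:approx_lower_bound}; the two then coincide up to the constant and $\log(1/\ep)$ factors, which is precisely the asserted asymptotic match as $\ep \to 0$.

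The key step is converting $m(B_t)$ into the covering number that defines the zooming dimension. First I would observe that $B_t = v^{-1}(2^{-t}, 2^{-(t-1)}]$ is exactly the level set $X_r = v^{-1}(r,2r]$ with $r = 2^{-t}$. Covering $B_t$ by $N_{r/8}(B_t)$ sets of diameter $r/8$ and using that, on the line, a set of diameter $r/8$ has Lebesgue measure at most $r/8$, gives $m(B_t) \le (r/8)\,N_{r/8}(B_t)$. The zooming-dimension definition then supplies $N_{r/8}(X_r) = C r^{-\beta}$, so
\[
m(B_t) \;\le\; \tfrac{C}{8}\, r^{\,1-\beta} \;=\; \tfrac{C}{8}\, 2^{-t(1-\beta)},
\]
valid for every $t$ with $2^{-t} < 1/2$; the single boundary index $t=1$ contributes only a bounded term absorbed into the constant.

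It then remains to sum a geometric series. Substituting the bound on $m(B_t)$ and $8^{D-t} = 2^{3(D-t)}$,
\[
S \;\le\; \tfrac{C}{8}\, 2^{-3D}\sum_{t=1}^{D} 2^{\,t(2+\beta)}.
\]
Since $2+\beta > 0$, the sum is dominated by its top term $2^{D(2+\beta)}$, whence $S = \boldsymbol{\Theta}\!\big(2^{-3D}\,2^{D(2+\beta)}\big) = \boldsymbol{\Theta}\!\big(2^{-D(1-\beta)}\big) = \boldsymbol{\Theta}(\ep^{1-\beta})$, using $\ep = 2^{-D}$. Plugging $S = \mathcal{O}(\ep^{1-\beta})$ into \Cref{thm:rr_upper_bound}, the $1/\ep^3$ prefactor combines with $\ep^{1-\beta}$ to produce $\ep^{-(\beta+2)}$, giving $\tau_\delta^\ep = \mathcal{O}\big((\log 1/\delta + \log 1/\ep)\,\ep^{-(\beta+2)}\big)$, and the match of $\E[\tau_\delta^\ep]$ with \Cref{cor:approx_lower_bound} follows from the shared-sum observation of the first paragraph.

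The hard part is the measure-to-covering-number passage together with the mild bookkeeping around the equality-form definition of the zooming dimension: I must check that replacing $N_{r/8}$ by $m(B_t)/(r/8)$ is legitimate at all the relevant scales $r = 2^{-t}$ (treating the boundary $r = 1/2$ separately), and that the equality $N_{r/8}(X_r) = C r^{-\beta}$ delivers genuinely two-sided control on $S$, so that the lower-bound side of the match is not merely an upper estimate. Everything after that reduction is the routine geometric summation above.
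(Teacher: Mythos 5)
Your proposal is correct and follows essentially the same route as the paper: rewrite $\ep^{-3}\sum_t m(B_t)/8^{D-t}$ as $\sum_t 8^t m(B_t)$, convert $m(B_t)$ to the covering number of $X_r$ at $r=2^{-t}$ to get $m(B_t)\le C\,2^{t(\beta-1)}$, and sum the geometric series; the paper likewise handles the matching claim by establishing the reverse (measure $\gtrsim$ covering-number) inequality as in the proof of \Cref{cor:approx_lower_bound}, which is exactly the two-sided control you flag as the point to check.
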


A key insight from~\Cref{cor:approx_lower_bound} and~\Cref{cor:algo_complexity} is that, in the limit $\ep \rightarrow 0$, continuous best arm identification and regret minimization are nearly equivalent.
\paragraph*{Computational complexity} Computations of~\Cref{alg:rr} besides drawing samples involve estimating the means $\hat{v}(h)$ and their comparison. Since each sample is included in exactly one empirical mean, the respective cost is $\tau_{\delta}^{\ep}$. After each refinement step, the minimal arm is identified, with which the other arms are then compared, amounting to two comparisons per arm present. Clearly, we always draw at least one sample per arm, yielding an overall sample complexity linear in the number of samples, i.e. $\mathcal{O}(\tau_{\delta}^{\ep})$. Hence, our approach is significantly more computationally efficient than Bayesian optimization methods, such as those used in~\cite{nicoli2024physicsinformedbayesianoptimizationvariational}, which require solving an increasingly large linear system after each sample.  
\paragraph*{Application to multi-dimensional parameter optimization}
In order to overcome the limitation of~\Cref{alg:rr} to a $1$-dimensional parameter space, we employ a variety of schemes to map it to higher dimensional parameter spaces. At the heart of each strategy lies the following idea: For some function $f:[0,1]^d \rightarrow \RR$, pick a point $p$ and direction $u$, which give rise to the $1$-dimensional function 
\begin{equation}
g(s) = f((p + us)_{[0,1]^d}),    
\end{equation} reducing the problem to computing 
\begin{equation}
s^* = \mathrm{argmin}_{s\in [0,1]} \ g(s)
\end{equation}
with Lipschitz constant $L \lVert u \rVert_2$.
By subscript $[0,1]^d$, we indicate ``modulo'', i.e. a periodic parameter domain, which applies to the VQA-setting. Depending on the application, the line may be truncated to $[0,1]^d$ instead.\\
Powells method~\cite{powell1964efficient} is a well-established, gradient-free optimization method, which incorporates a scheme of this kind and 
it can be shown that Powell's method efficiently minimizes functions of quadratic form. Typically, Brent's parabolic interpolation method~\cite{brent2013algorithms} is used to find $s^*$, however, any gradient free optimizer can be applied, which makes Powell's method a well-suited extension to~\Cref{alg:rr}.
Another simple extension that falls into this category is to always choose $u_k$ at random. We discuss the respective practical implementations in~\Cref{sec:notes_practical}.

\section{Experiments}\label{sec:experiments}
We compare the empirical convergence of~\Cref{alg:rr} and finite difference based methods on a $1$-d toy example and two VQAs (see~\Cref{fig:results}).  
\begin{figure*}
    \centering
    \hspace{-1.3cm}
    \includegraphics[width=\linewidth]{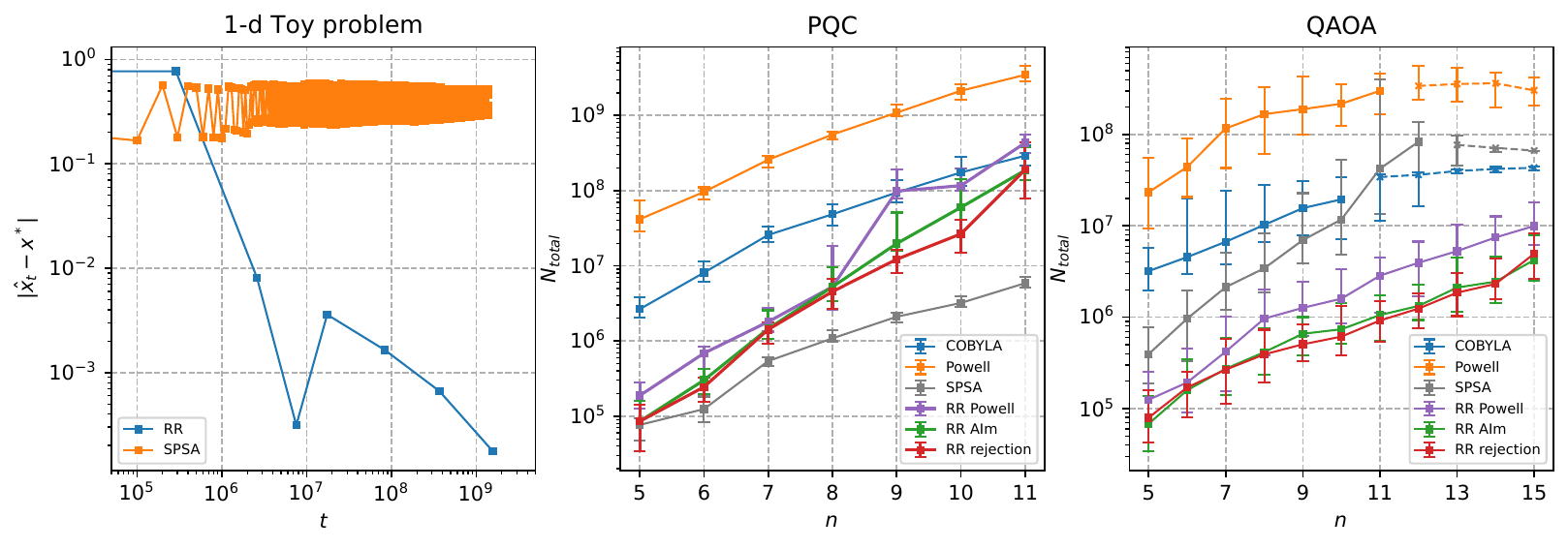}
    \caption{(1): Distance $|\hat{x}_t -x^*|$ to the true optimum at sample $t$ for runs of~\Cref{alg:rr} and SPSA \textbf{(left)}. (2): Median and $(0.25, 0.75)$-quantiles of the sample complexity $N_{\mathrm{total}}$ for optimizing a PQC \textbf{(middle)} and MaxCut-QAOA \textbf{(right)} below the thresholds $C=0.4$ and $C=0.2$, respectively. $N_\mathrm{total}$ refers to the sample complexity and $n$ to the number of qubits. The medians are computed over $20$ and $100$ simulations, respectively. Failure of convergence is indicated by a dashed line.}
    \label{fig:results}
\end{figure*}
Our toy experiment conducted on a function with flat regions, local minima and a linear ``wedge'' around the optimum (see~\Cref{fig:toy_function} and~\Cref{sec:toy}) illustrates how our algorithm profits from global context. As predicted by~\Cref{thm:rr_upper_bound}, the minima $\hat{x}_t$ computed by~\Cref{alg:rr} become consistently closer to the true $x^*$ while SPSA~\cite{spall1998overview} remains stuck in a local minimum when not initialized close to $x^*$. After a constant number of samples,~\Cref{alg:rr} samples exclusively from points near the wedge, indicating that the flat regions no longer influence convergence beyond this point.\\
The second part of our experiments examines the sample complexity of training VQAs to reach a target loss threshold, where we compare COBYLA~\cite{Powell1994}, Powell's method~\cite{powell1964efficient} and SPSA with three implementations of the multidimensional proxy for~\Cref{alg:rr}, outlined in~\Cref{sec:notes_practical}. The first example reproduces the experimental setup from~\cite{Arra21}, which aims to train a PQC with respect to an objective that exhibits barren plateaus. Our second example is a standard application of QAOA~\cite{farhi2014quantumapproximateoptimizationalgorithm} to the MaxCut problem on random graphs, where the approximation ratio was optimized. For a detailed overview over the experimental setup we refer to~\Cref{sec:exp_details}.\\
Although SPSA remains dominant on the PQC example, our algorithms achieve better sample complexity than Powell's method and COBYLA, and effectively outperform all finite difference based optimizers in the QAOA setting. Moreover, Powell's method experiences substantial improvement when equipped with~\Cref{alg:rr} as $1$-d optimization routine. Our experiments show that our simple, unrefined method can compete with state-of-the-art baselines. While we do not consider our multidimensional proxy to constitute a new state-of-the-art approach, our experiments highlight the power of global context and the effectiveness of the bandit-framework we propose, especially in combination with other methods. This may include local methods. One might argue that finite difference strategies could eventually converge when probing enough initial points: bandit algorithms provide a principled strategy for selecting which points to probe.

\section{Conclusion and Outlook}

Here, we introduced a novel approach to VQA optimization based on continuous bandit techniques for exploration. Our work builds on and extends the existing bandit literature in several key ways. First, we generalize best-arm identification to continuous, Lipschitz bandits, formally motivating our choice of alternate set and deriving an instance-specific sample-complexity lower bound. We complement this bound with an algorithm that is provably near-optimal on the unit interval. Moreover, we show that, in the limit, our bounds coincide with those known for regret-minimization methods on an infinite time horizon, a result, which to our knowledge has not been known previously.\\
We also present simple extensions of our algorithm to higher dimensions, enabling us to evaluate the framework on VQA instances that exhibit barren plateaus. Notably, these straightforward schemes achieve competitive sample complexity on the considered instances. Further work is required to develop bandit algorithms that provide more general, non-asymptotic, and practically significant improvement in sample complexity. However, the framework we propose offers a scalable alternative that leverages global information and information-theoretic principles, while remaining inherently robust to noise. Unlike finite-difference–based methods, which become provably infeasible in the presence of barren plateaus.


\Cref{alg:rr} suffers from a relatively large constant cost which could be reduced via more efficient construction of confidence intervals either by further exploiting the Lipschitz property, as~\cite{pmlr-v35-magureanu14} or more effective concentration laws, as e.g.~\cite{ramdas2023betting}. 
On the theoretical side, one natural future direction is the extension of our algorithm to higher dimensions. This comes with the challenge that the algorithm not only needs to choose which parts of the domain it refines, but also along which direction - an extension of this kind for our algorithm requires additional ideas. Moreover, it may be possible to circumvent working with an approximate upper bound and formulate an algorithm in the style of \emph{Track-And-Stop}, which proposes new parameters based on direct approximations of the continuous lower bound.
Finally, our theoretical upper bounds reveal a new insight about the effect of BPs on trainability in $1$-d, which we conjecture to also hold for the multi-dimensional setting: in the asymptotic sense, flat regions only affect the performance of our algorithms when the respective function value is close to the optimal value of the VQA. Therefore, even if BPs cannot be avoided, bandit methods have the potential to mitigate their effects.

\clearpage
\bibliographystyle{plainnat}
\bibliography{sources}

\onecolumn\newpage
\appendix

\section{Proofs of \Cref{sec:lower_bound}}\label{sec:proofs_lb}
In this section, we prove the theoretical results from~\Cref{sec:lower_bound}. We begin with the proof of the continuous lower bound given in~\Cref{thm:exact_lower_bound}.
\begin{proof}[Proof of~\Cref{thm:exact_lower_bound}]
Consider a continuous bandit as defined in~\Cref{sec:preliminaries} with expected reward $\mu(x)$ on domain $\mathcal{X}$ and some policy, which proposes an \emph{action}, i.e. a point $x_t \in \mathcal{X}$ with probability density $\pi(X_t = x_t| Z_t = z_t)$ in round $t$. By $Z_t = (R_{t-1}, X_{t-1}, \dots, R_1, X_1)$, we denote the history of previously observed rewards $R_k$ and actions $X_k$. We assume that the reward distributions of the bandit and the policy have a probability density, which is true in most cases (see e.g. chapter 4.7 in~\cite{lattimore2020bandit}. For simplicity, we further only consider policies with Riemann-integrable densities. One can rewrite the following proof in terms of Lebesgue integrals, however, in the context of $1$-sub-Gaussian, Lipschitz reward distributions, we do not expect any gain from the additional generality. Now consider a bandit with reward distribution $\Lambda(x)$ and expected reward $\lambda \in \mathrm{Alt}^{\ep}(\mu)$. We now proceed similarly as in the proof of Theorem 4.1 in~\cite{kaufmann:tel-03825097}. Let $Z_t^\mu \sim P_\mu(Z_t), Z_t^\lambda \sim P_\lambda(Z_t)$ denote the History of samples under policy $\pi$ and reward distributions $M, \Lambda$, respectively. On the one hand, consider $\mathcal{E}$ to be the event that the policy returns $x_{\tau_\delta^\ep}$, such that $|x_{\tau_\delta^\ep} - x^*| \leq \ep$. If $\pi$ $\delta$-PAC, then we get
\begin{equation}\label{eq:lem_kaufmann}
    \kl(H_{\tau_\delta^\ep}, H_{\tau_\delta^\ep}) \geq \kl(\mathbb{P}_\mu(\mathcal{E}),\mathbb{P}_\lambda(\mathcal{E})) \geq \kl(1-\delta, \delta)
\end{equation}
by Lemma 0.1 in~\cite{kaufmann:tel-03825097}, where we use the same definition for $\mathbb{P}_\mu(\mathcal{E}),\mathbb{P}_\lambda(\mathcal{E})$, i.e. the expected exponents of the log-likelihood ratios. Note that the proof only makes use of the data processing inequality, which does not require a discrete action space. In the following, assume without loss of generality that $\Lambda(x)$ dominates $M(x)$ on all $x \in \mathcal{X}$. On the other hand note that

\begin{equation}
    P_\mu(Z_t) = \prod_{i=1}^t P_\mu(R_i | X_i) \pi(X_i |Z_{i-1})
\end{equation}
and 
\begin{equation}
    P_\lambda(Z_t) = \prod_{i=1}^t P_\lambda(R_i | X_i) \pi(X_i |Z_{i-1})
\end{equation}

and therefore
\begin{align}
    \E_{P_\mu, \boldsymbol{X}}\left[\log{\frac{P_\mu(Z_t)}{P_\lambda(Z_t)}}\right] &= \E_{P_\mu, \boldsymbol{X}}\left[\sum_{i=1}^t \log{\frac{P_\mu(R_i | X_i)}{P_\lambda(R_i | X_i)}}\right]\\
    &=\sum_{i=1}^t \E_{P_\mu, X_i}\left[\log{\frac{P_\mu(R_i | X_i)}{P_\lambda(R_i | X_i)}}\right] \\
    &= \sum_{i=1}^t \int_{\mathcal{X}\times \RR} w_i(x, r) \log{\frac{P_\mu(R_i =r| X_i = x)}{P_\lambda(R_i = r| X_i=x)}}d(x, r),
\end{align}
where $w_i(x,r)$ is the joint probability density of action and reward for $\pi$ in round $i$. Letting $w_i(x)$ the corresponding marginal density of the action, we obtain
\begin{align}
    &= \sum_{i=1}^t \int_{\mathcal{X}\times \RR} w_i(x) \int_{\RR} P_\mu(R_i =r| X_i = x) \log{\frac{P_\mu(R_i =r| X_i = x)}{P_\lambda(R_i = r| X_i=x)}}dr dx\\
    &= \sum_{i=1}^t \int_{\mathcal{X}\times \RR} w_i(x) \kl(M(x), \Lambda(x)) dx\\
    &=\int_{\mathcal{X}\times \RR} \left(\sum_{i=1}^t w_i(x)\right) \kl(M(x), \Lambda(x)) dx.
\end{align}
By~\Cref{eq:lem_kaufmann}, we now get
\begin{align}
    \int_{\mathcal{X}\times \RR} \left(\sum_{i=1}^t w_i(x)\right) \kl(M(x), \Lambda(x)) dx &\geq \kl(1-\delta, \delta)\\
    \Rightarrow t \int_{\mathcal{X}\times \RR} \left(\sum_{i=1}^t \frac{w_i(x)}{t}\right) \kl(M(x), \Lambda(x)) dx &\geq \kl(1-\delta, \delta).
\end{align}
Since the inequality holds for all $\Lambda$ according to our assumption, we can restate this as
\begin{equation}
    t\inf_{\Lambda: \lambda(x) \in \mathrm{Alt}^{\ep}(\mu)} \int_{\mathcal{X}\times \RR} \left(\sum_{i=1}^t \frac{w_i(x)}{t}\right) \kl(M(x), \Lambda(x)) dx \geq \kl(1-\delta, \delta)
\end{equation}    
and since $w_i$ are arbitrary and $w_i \in \mathcal{W}$,
\begin{equation}
    t\sup_{w\in \mathcal{W}}\inf_{\Lambda: \lambda(x) \in \mathrm{Alt}^{\ep}(\mu)} \int_{\mathcal{X}\times \RR} w(x) \kl(M(x), \Lambda(x)) dx \geq \kl(1-\delta, \delta)
\end{equation}

and by $M, \Lambda$ sub-Gaussian,
\begin{equation}
    t\sup_{w\in \mathcal{W}}\inf_{\Lambda: \lambda(x) \in \mathrm{Alt}^{\ep}(\mu)} \int_{\mathcal{X}\times \RR} w(x) \kl(M(x), \Lambda(x)) dx \leq t\sup_{w\in \mathcal{W}}\inf_{\lambda(x) \in \mathrm{Alt}^{\ep}(\mu)}  \int_{\mathcal{X}\times \RR} w(x) \kl(\mu(x), \lambda(x))dx.
\end{equation}
The statement follows directly.
\end{proof}
Note that the above proof works independently of the definition of $\mathrm{Alt}^{\ep}(\mu)$, given that the respective integrals remain well-defined. Hence, one may be interested in the impact of the choice of $\mathrm{Alt}^{\ep}(\mu)$. Indeed, as stated in~\Cref{cor:trivial_lower_bound}, defining 
\begin{equation}
    \mathrm{Alt}^{\ep}(\mu)  \triangleq \{\lambda(x) \, 1\text{-Lipschitz}:\max_x \mu(x) - \max_x \lambda(x) \geq \ep\}
\end{equation}
makes the lower bound much weaker, which we show in the next proof.
\begin{proof}[Proof of~\Cref{cor:trivial_lower_bound}]
    Consider
    \begin{equation}
        \mathrm{Alt}^{\ep}_\eta(\mu)  \triangleq \{\lambda(x) \, 1\text{-Lipschitz}:\max_x \mu(x) - \max_x \lambda(x) \geq \ep + \eta\}
    \end{equation}
    for some $\eta > 0$ and note that $\mathrm{Alt}^{\ep}_\eta(\mu) \overset{\eta \rightarrow 0}{\longrightarrow} \mathrm{Alt}^{\ep}(\mu)$.
    First, we show that $c^*(\mu) \leq \ep$. \\
    By the function class considered in $\mathrm{Alt}^{\ep}(\mu)$, we can choose $\lambda(x)$, such that $(\lambda(x) - \mu(x))^2 \leq (\ep + \eta)^2$ at any point $x$. Therefore,
    \begin{equation}
        c^*(\mu) = \sup_{w \in \mathcal{W}} \inf_{\lambda \in \mathrm{Alt}^{\ep}_\eta(\mu)} \int_{\mathcal{X}} w(x) (\mu(x) - \lambda(x))^2 dx \leq \sup_{w \in \mathcal{W}} \int_{\mathcal{X}} w(x) (\ep + \eta)^2 dx = (\ep + \eta)^2.
    \end{equation}
    For $\eta \rightarrow 0$, the results follows.\\
    Next, we show that $c^*(\mu) \geq \ep$. \\
    If we choose $w(x) = \delta_{x^*}(x)$, which is $1$ if and only if $x=x^*$, we get 
    \begin{equation}
        c^*(\mu) = \sup_{w \in \mathcal{W}} \inf_{\lambda \in \mathrm{Alt}^{\ep}_\eta(\mu)} \int_{\mathcal{X}} w(x) (\mu(x) - \lambda(x))^2 dx \geq \inf_{\lambda \in \mathrm{Alt}^{\ep}_\eta(\mu)} (\mu(x^*) - \lambda(x^*))^2 dx \geq (\ep + \eta)^2,
    \end{equation}
    again by definition of $\mathrm{Alt}^{\ep}_\eta(\mu)$. Letting $\eta \rightarrow 0$ yields the desired result, which concludes the proof.
\end{proof}
Now we proceed to proving~\Cref{thm:approx_lower_bound}. Recall that for convenience, we assume that $\epsilon = 2^{-D}$. Also for convenience, assume that also $\kappa_0$ is an integer power of $2$, such that $\ep = 2^{-S}\kappa_0$ for some $S \in \mathbb{N}$. First, we present the arguments from~\Cref{sec:discrete_lower_bound} in a rigorous way. With slight abuse of notation, let
\begin{equation}
     \mathrm{Alt}^{\ep}(v) \triangleq \{v'(x) \, L\text{-Lipschitz}: \left|\left(\mathrm{argmin}_{x \in [0,1]} v'(x) \right) - x^*\right| > \ep \}.
\end{equation}
In the following, let $A_0 \triangleq v^{-1}[0, \ep)$ and $A_{\leq s} \triangleq \bigcup_{t\leq s} A_t$. Consider some interval $[a, b]$ contained in some level set set $A_t$. If we require $v(a)=v'(a)$ and $v(b)=v'(b)$, $v'$, the Lipschitz-constraint prevents $v'$ from obtaining the value $-\ep$ when the interval is shorter than $\frac{2}{L}(2^t + 2)\ep$. Therefore, we only need to consider intervals $[a,b]\subseteq A_t$ with $b-a \geq \frac{2}{L}(2^t + 2)\ep$.\\
By the assumption in \Cref{sec:preliminaries}, when $2^t \ep \lesssim \kappa_0$, $A_t$ must be the union of exactly two intervals $[a_1, b_1], [a_2, b_2]$. Let $A_t^l$ be the largest interval among the two.
Furthermore, let 
\begin{equation}
    R(v) = \left\{A_t^l : 0 \leq t \leq S \wedge m(A_t^l) \geq \frac{2}{L}(2^t + 2)\ep \right\}
\end{equation}
denote the set of indices with feasible level set. If $R(v) = \emptyset$, $v'$ can only attain some trivial solution, which is given by 
\begin{equation}
    v_0 \triangleq 
    \begin{cases}
    3 \ep + \frac{1}{L}|x - m| & x \in [\frac{m-\ep}{L}, m+\frac{\ep}{L}] \\
    4 \ep & x \in A_{\leq 2} \setminus [\frac{m-\ep}{L}, m+\frac{\ep}{L}]\\
    v(x) & \mathrm{otherwise},
    \end{cases}
\end{equation}
where $m \in A_{\leq 2}$ is chosen, such that it has distance $\ep$ from its boundary, i.e. $\inf_{x \in \partial A_{\leq 2}} \lVert m - x \rVert_2 = \frac{\ep}{L}$. This function equals $v$ on all higher level sets, and is constant on $A_{\leq 2}$ except for a small bump, which touches the boundary of $A_{\leq 2}$ and represents a reasonable ``approximate'' infimum with respect to $v'$ when the latter can not have a minimum outside of $[x^* - \frac{\ep}{L}, x^* + \frac{\ep}{L}]$ with value smaller than $0$. \\
In the following, we restrict $\mathrm{Alt}^{\ep}(v)$ to a set of functions, which equal $v$ everywhere except on some $A_t^l\in R(v)$, where they have one wedge, i.e. of the form $-\ep \frac{1}{L}|x - m|$ for some $m$. 
\begin{lemma}\label{lem:first_approx_bound}
    \begin{equation}\label{eq:rhs_lemma}
    \frac{2}{L}\sup_{w \in \mathcal{W}} \inf_{v' \in \mathrm{Alt}^{\ep}(v)} \int_{0}^{1} w(x)(v'(x)-v(x))^2dx \leq  \frac{2\ep^3}{\sum_{A_t \in R(v)} \frac{m(A_t)}{(2^t+2)^3}}.
\end{equation}
\end{lemma}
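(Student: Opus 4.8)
The goal is to upper-bound $c^*(v)$, and in this lemma $\mathrm{Alt}^{\ep}(v)$ already denotes the restricted single-wedge family; since shrinking the $v'$-player's options can only raise the inner infimum, the quantity on the left of~\eqref{eq:rhs_lemma} dominates the true $c^*(v)$, so bounding it is exactly what the later lower bound needs. The plan is to fix an arbitrary sampling density $w\in\mathcal{W}$, exhibit for that $w$ a cheap confusing instance, and optimise over $w$ only at the very end. Writing $w(A_t^l)\triangleq\int_{A_t^l}w(x)\,dx$ for the mass $w$ places on a feasible level set, the normalisation $\int_0^1 w=1$ supplies the single global constraint $\sum_{A_t\in R(v)}w(A_t^l)\le 1$, which is what ultimately forces the bound.

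\emph{Cost of one wedge.} I would fix $A_t^l\in R(v)$ and a centre $m$ interior to $A_t^l$. The corresponding $v'$ differs from $v$ only on the wedge, where, using $v\le 2^t\ep$ on $A_t$ together with the $L$-Lipschitz slope, $(v-v')^2$ is dominated by the triangular profile $\phi_m(x)=\big((2^t+2)\ep-L|x-m|\big)_+^2$, supported on an interval of width $\ell_t\triangleq\frac{2}{L}(2^t+2)\ep$ with $\int_{\RR}\phi_m=\frac{2\big((2^t+2)\ep\big)^3}{3L}$. The cost of this instance against $w$ is $\int_0^1 w\,\phi_m$, and averaging over admissible centres and applying Fubini,
\begin{equation*}
\min_{m}\int_0^1 w\,\phi_m
\;\le\; \frac{1}{m(A_t^l)}\int_{A_t^l}\!\Big(\int_0^1 w(x)\,\phi_m(x)\,dx\Big)dm
\;\le\; \frac{w(A_t^{l,+})}{m(A_t^l)}\int_{\RR}\phi ,
\end{equation*}
where $A_t^{l,+}$ is the $\ell_t/2$-enlargement of $A_t^l$, on which $\phi_m(x)$ is supported. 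Thus the $v'$-player can answer $w$ on $A_t^l$ at cost at most $c_t\,w(A_t^{l,+})/m(A_t^l)$ with $c_t=\frac{2}{3L}\big((2^t+2)\ep\big)^3$.

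\emph{Water-filling.} Since the $v'$-player is free to choose the cheapest feasible level set,
\begin{equation*}
\inf_{v'\in\mathrm{Alt}^{\ep}(v)}\int_0^1 w\,(v'-v)^2
\;\le\; \min_{A_t\in R(v)} c_t\,\frac{w(A_t^{l,+})}{m(A_t^l)} .
\end{equation*}
Taking $\sup_w$ subject to $\sum_{A_t\in R(v)}w(A_t^l)\le 1$ is now a finite max--min, and the maximiser equalises all terms: the optimal $w$ allocates $w(A_t^l)\propto m(A_t^l)/(2^t+2)^3$, giving value proportional to $\big(\sum_{A_t\in R(v)}m(A_t^l)/(2^t+2)^3\big)^{-1}$. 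Collecting the constant from $\int\phi$ and the prefactor $2/L$ then reproduces the right-hand side of~\eqref{eq:rhs_lemma}.

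\textbf{Main obstacle.} The Fubini averaging and the water-filling step are routine; the delicate work is geometric and structural. One must verify that the constructed $v'$ is genuinely $L$-Lipschitz and lies in $\mathrm{Alt}^{\ep}(v)$ — i.e. its minimiser sits inside $A_t^l$ at distance $>\ep$ from $x^*$ — and that the crude replacement $v\le 2^t\ep$ on $A_t$ does not cost a power in the final rate. The sharper point is justifying that single-wedge instances really are the $v'$-player's best response and, correspondingly, that any mass $w$ places outside $\bigcup_{A_t\in R(v)}A_t^l$ is wasted, so that restricting the normalisation to $\sum_{A_t\in R(v)}w(A_t^l)\le 1$ is legitimate (including controlling the overlap between the enlargements $A_t^{l,+}$). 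Finally, the degenerate case $R(v)=\emptyset$, where only the trivial instance $v_0$ survives, must be shown to be consistent with the stated bound.
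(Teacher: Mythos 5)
Your proposal is correct and follows essentially the same route as the paper: restrict to single-wedge alternatives supported on the feasible level sets $A_t^l \in R(v)$, bound the cost of one wedge against a fixed $w$ by an averaging argument over where the wedge sits (the paper phrases this as ``the optimal $w$ is uniform on $A_t^l$'' with the cruder pointwise bound $(2^t+2)^2\ep^2$ on a window of length $\tfrac{2}{L}(2^t+2)\ep$, which is your Fubini step in disguise), and then equalise the per-level costs via the same water-filling allocation $w_t \propto m(A_t^l)/(2^t+2)^3$. The loose ends you flag (Lipschitz feasibility of the wedge, the $R(v)=\emptyset$ case, and the enlargement overlap) are handled at the same level of detail, or dismissed as trivial, in the paper's own proof.
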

\begin{proof}
If $R(v) = \emptyset$, \Cref{eq:rhs_lemma} is trivially satisfied.
Otherwise, we restrict $v'$ to have a wedge on some $A_t^l \in R(v)$, which attains $v'(x^*(v'))=-\ep$ is equal to $v$ everywhere else. We denote the restricted alternate set by $\overline{\mathrm{Alt}}^{\ep}(v)$. Observe that $(v(x) - v'(x))^2 \leq (2^t + 2)^2 \ep^2$, which it does on a set $F_t \subseteq A_t$ of length $m(F_t) \leq  \frac{2}{L}(2^t + 2)\ep $. Therefore, 
\begin{align}
    \sup_{w \in \mathcal{W}} \inf_{v' \in \overline{\mathrm{Alt}}^{\ep}(v))} \int_{0}^{1} w(x)(v'(x)-v(x))^2dx &\leq  \sup_{w \in \mathcal{W}} \min_t \inf_{\substack{F_t \subseteq A_t^l\\ m(F_t) = \frac{2}{L}(2^t + 2)\ep}}  \int_{A_t^l} w(x)(2^t + 2)^2 \ep^2 \mathbbm{1}_{F_t} dx\\
    &\leq \sup_{w \in \mathcal{W}} \min_t \inf_{\substack{F_t \subseteq A_t^l\\ m(F_t) = \frac{2}{L}(2^t + 2)\ep}} (2^t + 2)^2 \ep^2 \int_{F_t}  w(x) dx . \label{eq:pf_lem1}
\end{align}
For $w$ to maximize~\Cref{eq:pf_lem1}, $w$ needs to be uniform over $A_t^l$, i.e. $\frac{w_t}{m(A_t^l)}$ for constants $w_t > 0$. Plugging this in yields
\begin{equation}
    \sup_{w \in \mathcal{W}} \min_t \inf_{\substack{F_t \subseteq A_t^l\\ m(F_t) = \frac{2}{L}(2^t + 2)\ep}}  (2^t + 2)^2 \ep^2 \int_{F_t}  w(x) dx \leq \sup_{ \substack{{w_t}_{A_t\in R(v)} \\ \sum_t w_t = 1} } \min_t \frac{2}{L}(2^t + 2)^3\ep^3\frac{w_t}{m(A_t^l)}.
\end{equation}
To minimize this expression, we need to choose $w_t$, such that $\frac{2}{L}(2^t + 2)^3\ep^3\frac{w_t}{m(A_t^l)}$ is the same for all $t$.
Hence, the optimal $w$ for the upper bound on $c^*(v)$ reads
\begin{equation}\label{eq:lamda_multid}
    w^*_s(x) = \begin{cases}
        \frac{m(A^l_s)/(2^s + 2)^{3}}{\sum_{A_t \in R(v)} m(A^l_t)/(2^t + 2)^{3}} & x \in A_s^l \cap R(v)\\
        0 & \mathrm{otherwise}.
    \end{cases} 
\end{equation}

Note that the factors $\frac{2}{L}$ cancel out. Plugging this into the above expression, we get
\begin{align}
    \sup_{w \in \mathcal{W}} \inf_{v' \in \mathrm{Alt}^{\ep}(v)} \int_{0}^{1} w(x)(v'(x)-v(x))^2dx &\leq \sup_{w \in \mathcal{W}} \int_{0}^{1}w(x)(v'(x)-v(x))^2dx \\
    &\leq \frac{\ep^3}{\sum_{A_t \in R(v)} \frac{m(A_t^l)}{(2^t+2)^3}} \\
    &\leq \frac{2\ep^3}{\sum_{A_t \in R(v)} \frac{m(A_t)}{(2^t+2)^3}},
\end{align}
where the last inequality follows from $m(A_t) \leq 2 m(A_t^l)$ for $A_t \in R(v)$.
\end{proof}

\begin{lemma}\label{lem:1d_assum}
    Let $v:[0,1]\rightarrow \RR$ satisfy the assumptions from \Cref{sec:preliminaries}. Then, 
    \begin{equation}\label{eq:assumption_lb}
        \sum_{t=1}^S \frac{m(A_t)}{(2^t + 2)^{3}}  \leq \max\left(2\sum_{A_t \in R(v)} \frac{m(A_t^l)}{(2^t + 2)^{3}}, \frac{\ep}{4}\right).
    \end{equation}
\end{lemma}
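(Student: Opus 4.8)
The plan is to rewrite the left-hand sum---phrased in terms of the full level sets $A_t$---as a sum over the larger sub-intervals $A_t^l$, and then to separate the indices according to whether $A_t^l$ is wide enough to lie in $R(v)$. The first ingredient is the structural fact noted just before the lemma: for $t\le S$ we have $2^t\ep\le\kappa_0$, so the local unimodality assumption~(iv) forces $A_t$ to be a union of at most two intervals. Consequently $m(A_t)\le 2\,m(A_t^l)$ and
\begin{equation*}
    \sum_{t=1}^S \frac{m(A_t)}{(2^t+2)^3}\le 2\sum_{t=1}^S \frac{m(A_t^l)}{(2^t+2)^3}.
\end{equation*}

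Next I would split $\{1,\dots,S\}$ into the feasible part $F=\{t: A_t^l\in R(v)\}$ and the infeasible part $I=\{1,\dots,S\}\setminus F$. The feasible indices are immediate, since every term with $t\in F$ already appears in the sum defining $R(v)$:
\begin{equation*}
    2\sum_{t\in F}\frac{m(A_t^l)}{(2^t+2)^3}\le 2\sum_{A_t\in R(v)}\frac{m(A_t^l)}{(2^t+2)^3},
\end{equation*}
which is exactly the first argument of the maximum. For $t\in I$ the defining inequality of $R(v)$ fails, i.e.\ $m(A_t^l)<\tfrac{2}{L}(2^t+2)\ep$, so each such term is at most $4\ep/\bigl(L(2^t+2)^2\bigr)$ and
\begin{equation*}
    2\sum_{t\in I}\frac{m(A_t^l)}{(2^t+2)^3}< \frac{4\ep}{L}\sum_{t\ge 1}\frac{1}{(2^t+2)^2}.
\end{equation*}
Since $2^t+2\ge 2^t$, the tail is dominated by a geometric series and bounded by a small explicit constant (one checks $\sum_{t\ge1}(2^t+2)^{-2}<1/8$), giving a bound of the shape $c\,\ep/L$.

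I would then assemble the maximum by a dichotomy. If $R(v)\cap\{1,\dots,S\}=\emptyset$ then $F=\emptyset$ and the entire sum is the infeasible contribution, which the geometric estimate should pin below $\ep/4$; otherwise the feasible contribution is present and one argues that it dominates the geometrically small infeasible remainder, so the total is controlled by $2\sum_{A_t\in R(v)}m(A_t^l)/(2^t+2)^3$. I expect this last assembly to be the main obstacle: the per-term infeasibility bound only caps each $m(A_t^l)$ at the threshold $\tfrac{2}{L}(2^t+2)\ep$, and summing these crude caps sits right at the edge of the target constant $\ep/4$. To make it tight I would exploit the geometric decay of the weights $(2^t+2)^{-3}$ together with the Lipschitz lower bound $m(A_t^l)\ge 2^{t-1}\ep/L$ on the level-set widths, so that the dominant small-$t$ terms are genuinely controlled rather than replaced by the loose threshold; this bookkeeping is where I expect the delicacy to lie.
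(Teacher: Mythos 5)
Your decomposition into feasible and infeasible indices, the reduction $m(A_t)\le 2m(A_t^l)$, and the geometric bound on the infeasible contribution are exactly the ingredients of the paper's proof (which bounds the infeasible part by $\ep/8$ using $m(A_t^l)<\tfrac{2}{L}(2^t+2)\ep$ and the decay of $(2^t+2)^{-2}$). The gap is in the final assembly, precisely the step you flagged as delicate. Casing on whether $R(v)$ is empty is the wrong dichotomy, and the claim that a nonempty feasible contribution ``dominates the geometrically small infeasible remainder'' is false: take $R(v)=\{A_S^l\}$ with $m(A_S^l)$ exactly at the feasibility threshold $\tfrac{2}{L}(2^S+2)\ep$. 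Then the feasible sum is of order $\ep/4^{S}$, while the infeasible sum can be of order $\ep$, so the total is in no way controlled by $2\sum_{A_t\in R(v)} m(A_t^l)/(2^t+2)^3$. The Lipschitz lower bound on level-set widths cannot repair this, because feasibility of a level set is a comparison of $m(A_t^l)$ against $\tfrac{2}{L}(2^t+2)\ep$, and a set can sit barely above that threshold while contributing almost nothing to the weighted sum.

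The correct dichotomy --- and what the $\max$ in the statement is really encoding --- is on the \emph{size} of the feasible sum, not on whether it has any terms. Let $I$ denote the infeasible contribution, bounded by $\ep/8$, and let $F=\sum_{A_t\in R(v)} m(A_t^l)/(2^t+2)^3$ (with $F=0$ if $R(v)=\emptyset$). If $F>\ep/8$, then $F>I$ and the total is at most $2F$; if $F\le \ep/8$, the total is at most $F+I\le \ep/4$. This two-line arithmetic replaces your bookkeeping and is how the paper concludes (there, the threshold $F>\ep/8$ appears in the equivalent form ``the right-hand side of the bound in the preceding lemma is less than $16\ep^2$''). As a side remark, both your estimate and the paper's are cavalier about the prefactor $4/L$ on the geometric series; for $L=1$ the infeasible part is only pinned at roughly $\ep/2$ rather than $\ep/8$, but this affects constants only, not the structure of the argument.
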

\begin{proof}
    If the right hand side of~\Cref{eq:rhs_lemma} is less than $16\ep^2$, the $v^*$-player takes $v^*=v'$ and otherwise she takes $v^*=v_0$.

In case $R(v)$ is empty, the $v^*$-player sets $v^*=v_0$.

Note now that
\begin{equation}
    \sum_{A_t \not\in R(v), t \leq S} \frac{m(A_t)}{(2^t+2)^3} \leq \frac{\ep}{(2^t+2)^2} < \frac{\ep}{8}.
\end{equation}
 
and if the right hand side of~\Cref{eq:rhs_lemma} is less than $16\ep^2$,
\begin{equation}
    \sum_{A_t \in R(v)} \frac{m(A_t)}{(2^t+2)^3} > \frac{\ep}{8}.
\end{equation}

Hence whenever $v^* \neq v_0$,
\begin{equation}
    \sum_{t=1}^{S} \frac{m(A_t)}{(2^t+2)^3} < 2\sum_{k \in R(v)} \frac{m(A_t)}{(2^t+2)^3}
\end{equation}

On the other hand if the right hand side is at least $16\ep^2$, then 
\begin{equation}
    \sum_{A_t \in R(v)} \frac{m(A_t)}{(2^t+2)^3} \leq \frac{\ep}{8}
\end{equation}
and so
\begin{equation}
    \sum_{t=1}^{S} \frac{m(A_t)}{(2^t+2)^3} < \frac{\ep}{4}.
\end{equation}
\end{proof}

\begin{corollary}\label{cor:lem_ass_1d}
    Let $v:[0,1]\rightarrow \RR$ satisfy the assumptions from \Cref{sec:preliminaries}. Then
    \begin{equation}
        \sup_{w \in \mathcal{W}} \int_{0}^{1}w(x)(v'(x)-v(x))^2dx \leq \frac{4\ep^3}{\sum_{t=1}^{S} \frac{m(A_t)}{(2^t+2)^3}}.
    \end{equation}
\end{corollary}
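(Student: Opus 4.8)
The plan is to derive the corollary by combining the upper bound of \Cref{lem:first_approx_bound} with the dichotomy of \Cref{lem:1d_assum} through a single case distinction. Abbreviate the two sums that appear as $P \triangleq \sum_{A_t \in R(v)} m(A_t^l)/(2^t+2)^3$ and $T \triangleq \sum_{t=1}^{S} m(A_t)/(2^t+2)^3$. \Cref{lem:1d_assum} guarantees $T \le \max(2P, \ep/4)$, so I would split according to which term realises the maximum, i.e.\ whether $2P \ge \ep/4$ or $2P < \ep/4$, and in each regime let the minimizing player respond with the instance prescribed in the proof of \Cref{lem:1d_assum}.

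In the regime $2P \ge \ep/4$, where $T \le 2P$, the minimizer plays the single-wedge alternate instance $v'$. The proof of \Cref{lem:first_approx_bound} already establishes $\sup_{w \in \mathcal{W}} \int_0^1 w(x)(v'(x)-v(x))^2\,dx \le \ep^3/P$. Since $T \le 2P$ gives $P \ge T/2$, substituting yields $\ep^3/P \le 2\ep^3/T \le 4\ep^3/T$, which is the asserted bound.

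In the complementary regime $2P < \ep/4$, where $T \le \ep/4$, the minimizer instead plays the trivial instance $v_0$. Here I would use that $v_0$ coincides with $v$ outside $A_{\le 2}$ and that on $A_{\le 2}$ one has $v(x) \in [0,4\ep]$ while $v_0(x) \in [3\ep,4\ep]$, so that $(v_0(x)-v(x))^2 \le 16\ep^2$ pointwise and vanishes elsewhere. Because every $w \in \mathcal{W}$ is a probability density, $\int_0^1 w(x)(v_0(x)-v(x))^2\,dx \le 16\ep^2 \int_{A_{\le 2}} w(x)\,dx \le 16\ep^2$; and $T \le \ep/4$ gives $4\ep^3/T \ge 16\ep^2$, closing this case as well.

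The heavy lifting is done by the two lemmas, so the remaining difficulty is bookkeeping rather than conceptual. The main point to get right is that the case conditions of \Cref{lem:1d_assum} line up exactly with the choice of responding instance (the wedge versus $v_0$), and that one is careful with the distinction between $m(A_t^l)$ and $m(A_t)$ when passing from $P$ to $T$. I also need to keep in mind that $v_0$ must be genuinely admissible, i.e.\ that its unique minimizer $m$ sits more than $\ep$ away from $x^*$ so that $v_0 \in \mathrm{Alt}^\ep(v)$; this is guaranteed by the construction of $v_0$ in the setup preceding \Cref{lem:first_approx_bound}. The only numerical step where a constant could be lost is the pointwise estimate $(v_0-v)^2 \le 16\ep^2$ combined with $T \le \ep/4$, and I expect these to match the factor $4$ on the right-hand side exactly, as the arithmetic above indicates.
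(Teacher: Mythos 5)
Your proposal is correct and follows essentially the same route as the paper: the same case split driven by the dichotomy of \Cref{lem:1d_assum}, with the wedge instance and the bound of \Cref{lem:first_approx_bound} handling the case $T\le 2P$, and the trivial instance $v_0$ with the pointwise estimate $(v_0-v)^2\le 16\ep^2 \le 4\ep^3/T$ handling the case $T\le \ep/4$. If anything, you make explicit the $16\ep^2$ computation for the $v_0$ branch, which the paper's proof leaves implicit behind a reference to \Cref{lem:first_approx_bound}.
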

\begin{proof}
    By~\Cref{lem:1d_assum}, we have that either
    \begin{equation}
        \sum_{t=1}^S \frac{m(A_t)}{(2^t + 2)^{3}}  \leq 2\sum_{A_t \in R(v)} \frac{m(A_t^l)}{(2^t + 2)^{3}}
    \end{equation}
    or
    \begin{equation}
        \sum_{t=1}^S \frac{m(A_t)}{(2^t + 2)^{3}}  \leq \frac{\ep}{4}.
    \end{equation}
    In the former case, the statement follows directly from~\Cref{lem:first_approx_bound}.
    In the latter case, we get
    \begin{equation}
        16\ep^2 = \frac{4\ep^3}{\ep/4} \leq \frac{4\ep^3}{\sum_{t=1}^{S} \frac{m(A_t)}{(2^t+2)^3}},
    \end{equation}
    and again, the desired statement follows from~\Cref{lem:first_approx_bound}.
\end{proof}

\begin{proof}[Proof of~\Cref{thm:approx_lower_bound}]
By~\Cref{cor:lem_ass_1d}
\begin{equation}
    \frac{2}{L}c^*_\ep(v) \leq \sup_{w \in \mathcal{W}} \int_{0}^{1}w(x)(v'(x)-v(x))^2dx \leq \frac{4\ep^3}{\sum_{t=1}^{S} \frac{m(A_t)}{(2^t+2)^3}}.
\end{equation}

Since $\kappa_0$ only depends on $v$ and not $\ep$, we then have for $\ep$ sufficiently small that
\begin{equation}
    \frac{2}{L}c^*_\ep(v) \leq \sup_{w \in \mathcal{W}} \int_{0}^{1}w(x)(v^*(x)-v(x))^2 dx \leq \frac{5\ep^3}{\sum_{t=1}^{T} \frac{m(A_t)}{(2^t+2)^3}}.
\end{equation}

This finally leads to

\begin{equation}
    \E[\tau_{\delta}^{\ep}] \geq \frac{\log(1/\delta)}{10\ep^3/L}\sum_{t=1}^{D} m(A_t)/(2^t+2)^3
\geq \frac{\log(1/\delta)}{80\ep^3/L}\sum_{t=1}^{D} \frac{m(A_t)}{8^t}.
\end{equation}
\end{proof}

\begin{proof}[Proof of~\Cref{cor:approx_lower_bound}]
    Let $C, \beta$, such that we require $Cr^{-\beta}$ sets of diameter $r/8$ to cover the sets $X_r = \{x \in [0,1]: r \leq \mu(x^*) - \mu(x) \leq r\}$ for any $r \in [0,1/2]$. Furthermore, let $\eta(\ep)$ be the largest $0 < \eta < \ep$, such that $N_{\eta}(X_{\eta}) \geq C'r^{-\beta}$. By $C'$, we denote the smallest constant, such that $\eta(\ep)$ exists for all $\kappa_0>\ep > 0.$ \\
    Note that $C'$, must exist and be independent of $\ep$. Otherwise, we could construct constant $C''$ and $\beta' < \beta$, such that we can cover all $X_r$ with at most $C'' r^{-\beta'}$ sets of diameter $r/8$, which contradicts the definition of the zooming dimension.\\
    Now let $D(\ep) = \lfloor \log_2{\ep} \rfloor$. Then, by unimodality of $v$, $B_{t-1} \cup B_t$ consists of at most two intervals and therefore
    \begin{equation}
        2^t N_{2^t}(B_{t-1} \cup B_t) \leq 2\cdot (B_{t-1} \cup B_t).
    \end{equation}
    Furthermore, note that
    \begin{equation}
        X_r \subset B_{t-1} \cup B_t \quad \forall \ 2^{-t} \leq r \leq 2^{-(t-1)}.
    \end{equation}
    Thus, for $\ep < \kappa_0$,
    \begin{align}\label{eq:cor_approx_lb1}
        \sum_{t=1}^{D(\ep)} 8^t m(B_t) &\geq \frac{1}{2\cdot 8} \sum_{t=2}^{D(\ep)} 8^t m(B_{t-1} \cup B_t)\\
        &\geq \frac{4^{D(\ep)}}{32} N_{2^{D(\ep)}}(B_{{D(\ep)}-1} \cup B_{D(\ep)}) \\
        &\geq \frac{4^{D(\ep)}}{32}\sup_{2^{-D(\ep)} \leq r \leq 2^{-(D(\ep)-1)}} N_r(X_r).
    \end{align}
    Since $\eta(\ep)=\ep$ in the limit, we can conclude that 
    \begin{align}
        \lim_{\ep \rightarrow 0} \sup_{2^{-D(\ep)} \leq r \leq 2^{-(D(\ep)-1)}} N_r(X_r) &= \lim_{\ep \rightarrow 0} \sup_{2^{-D(\ep)} \leq r \leq 2^{-(D(\ep)-1)}} N_r(X_r)\\
        &\geq \lim_{\ep \rightarrow 0} C' 2^{\beta D(\eta(\ep))}\\
        &= \lim_{\ep \rightarrow 0} \ep^{-\beta}.
    \end{align}
    When plugging this back into~\Cref{eq:cor_approx_lb1}, the statement follows directly.
\end{proof}

\section{Proofs of~\Cref{sec:algorithm}}\label{sec:proofs_algo}
For illustration purposes, we start by analyzing~\Cref{alg:rr} for $L=1$. Let $G_0 \triangleq [0,1]$. The goal for this round is to exclude all points $x$, for which $v(x) \geq \frac{1}{2}$, i.e. obtain $E_1 \supseteq B_1$, with probability at least $1 - \frac{\delta}{2^1}$. 
Therefore, we pull each of the $2^{1+3} = 16$ arms in $H_1$ sufficiently many times to create symmetric confidence intervals for $v(h)$, $h \in H_1$, of length at most $1/2^{1+3}=1/16$, i.e.\ of the form 
\begin{equation*}
    v(h) = \hat{v}(h) \pm \frac{1}{2^{1+4}} = \hat{v}(h) \pm \frac{1}{32} =:[\underline{v}(h),\overline{v}(h)],
\end{equation*}
at confidence level $1-\delta/2^{2 \cdot 1+3}=1-\delta/32$, making the multiple confidence level at least $1-\delta/2$. When constructing these confidence intervals, we do this independently for each $h \in H_1$, i.e.\ without using any structure of $v$ to make inference from pulls of nearby arms. Let $E_1$ be the set of points $x \in G_0$ for which we can conclude after these pulls that $v(x)>9/32$, given that all confidence intervals are correct and let $B_{\geq t}$ for $\cup_{s \geq t}B_s$.\\
Assuming that all the confidence intervals indeed cover the true value $v(h)$, the Lipschitz property tells us that the true function value $v(a^*)$ of the arm $a^*$ with the smallest empirical mean, i.e.\ $a^*=\mathrm{argmin}_{h \in H_1}\hat{v}(h)$, is at most $1/32$. This implies that 
\begin{equation*}
    \overline{v}(a_1) \leq \frac{1}{32}+\frac{1}{16}=\frac{3}{32}.
\end{equation*}
Suppose now that $x \in B_1$, i.e.\ that $v(x)>1/2$. Then there is an $h \in H_1$ at distance at most $1/32$ from $x$, which by the Lipschitz property must satisfy $v(h)>15/32$ and hence the lower end of the confidence interval for $v(h)$ satisfies 
\begin{equation*}
    \underline{v}(h)>\frac{13}{32},
\end{equation*}
from which we can conclude that $v(x) > 12/32$ for all $x \in [h-1/32, h+1/32]$. Taken together, the above allows us to drop the simplifying assumption that $v(x^*)=0$ and leads to the conclusion
\begin{equation*}
    v(x)-v(a^*)>\frac{9}{32} \qquad \forall x \in [h-1/32, h+1/32].
\end{equation*} 
In particular, this is the case when $\hat{v}(h) - \hat{v}(a^*) > \frac{12}{32}$.
Hence $x \in E_1$ and since $x$ was arbitrary, $E_1 \supseteq B_{\geq 2}$ with probability at least $1 - \frac{\delta}{2}$.
For $G_1=G_0 \setminus E_1$, we immediately get that $G_1 \subseteq B_{\geq 2}$. Moreover, as $1/2^2 < 9/32$, we get  $B_{\geq 3} \subseteq G_1 \subseteq B_{\geq 2}$.\\
Continuing this procedure for $t=2, \dots, D$ with confidence levels $1 - \frac{\delta}{2^t}$ yields~\Cref{alg:rr}.

\begin{proof}[Proof of~\Cref{thm:rr_upper_bound}]
    
The algorithm works as follows for rounds $t=2,3,\ldots, D$. 
Pull each of the arms of $H_t \cap G_{t-1}$ sufficiently many times to obtain symmetric confidence intervals 
\begin{equation}
    v(h) = \hat{v}(h) \pm \frac{1}{2^{t+4}}
\end{equation}
at confidence level $1-\delta/(2^{t}|H_t|$. Let $E_t$ be the set of points in $G_{t-1}$, for which can conclude that $v(x)-v(a_t) > 9/2^{t+4}$, provided that all confidence intervals are correct, and set $G_{t}=G_{t-1} \setminus E_t$. This yields the sets $G_2,G_3,\ldots, G_{D}$. By the union bound, this results in a multiple confidence level of at least $1-\delta/2^t$.\\

Recall that for constructing a symmetric confidence interval on confidence level 1-$\alpha$ of length $2\ell$ for the mean of a Gaussian distribution with unit variance, it suffices that the sample size $N$ satisfies
\begin{equation}
    N \geq \frac{2\log(2/\alpha)}{\ell^2}.
\end{equation}
Hence, the number of pulls per arm is
$2^{2t+9}\log(2^{2t+4}/\delta)$ and so the total number of pulls in this round becomes
\begin{equation}
    2^{2t+9}|H_t|\mu(G_{t-1})\log(2^{t}|H_t|/\delta) = L 2^{3t+12}\mu(G_{t-1})\log(L2^{2t+4}/\delta).
\end{equation}
Let 
\begin{equation}
    a_t=\mathrm{argmin}_{h \in H_t}v(h).
\end{equation}
Then $v(a_t) \leq 1/2^{t+4}$ and thus 
\begin{equation}
    \overline{v}(a_t) \leq \frac{3}{2^{t+4}}.
\end{equation}
Consider an $x \in G_{t-1}$ for which $v(x) > 1/2^t$, if such $x$ exists. Then there exists and $h \in H_t$ such that $v(h)> 15/2^{t+4}$. If $h \in G_{t-1}$, so that the $h$-arm is actually pulled in this round, this gives 
\begin{equation}
    \underline{v}(h) > \frac{13}{2^{t+4}},
\end{equation}
from which we can conclude that $v(x) > 12/2^{t+4}$ and hence $v(x)-v(a_t)>9/2^{t+4}$ and thus $x \in E_t$ so that $x \not \in G_t$.
If on the other hand $h \not \in G_{t-1}$, we already know from the previous round that $v(h)-v(a_{t-1}) > 9/2^{t+3}$ and so $v(h)-v(a_t)>9/2^{t+3}$ and we can conclude that $v(x)-v(a_t) > 9/2^{t+3}-2^{t+4} > 9/2^{t+4}$ and thus again $x \in E_t$. Thus provided that all confidence intervals are correct, $B_{\geq t+2} \subseteq G_{t} \subseteq B_{\geq t+1}$.

\smallskip

Now run the above for $t=1,\ldots,D$. After this, we have $B_{\geq D+2} \subseteq G_{D} \subseteq B_{ \geq D+1}$, i.e.\ $v^{-1}[0,\ep/2] \subseteq G_{D} \cap v^{-1}[0,\ep]$. Hence $G_{D}$ is non-empty and all elements $x \in G$ are $\ep$-optimal arms, so any arm in $G_{D} \cap H_D$ is $\ep$-optimal.

The complexity of the algorithm is upper bounded by
\begin{equation}
    \sum_{t=0}^{D} L 2^{3t+12}\mu(G_{t-1})\log(L2^{2t+4}/\delta) \leq 2^{14} L \sum_{t=1}^{D} 8^t \mu(B_{\geq t}) (t+\log(/\delta)).
\end{equation}
Since $\sum_{j=1}^t j \, 8^j < 2t \cdot 8^t$ and $\sum_{j=1}^t 8^j < 2 \cdot 8^t$, the right hand side is bounded by
\begin{equation}
    2^{15} L\sum_{t=1}^{D} (t+\log(1/\delta))8^t \mu(B_{t}) < 2^{15}L({D}+\log(1/\delta)) \sum_{t=1}^{D} 8^t \mu(B_{t}).
\end{equation}
The expression on the right equals
\begin{equation}
    2^{15}L\frac{\log_2(1/\ep)+\log(1/\delta)}{\ep^3}\sum_{t=1}^{D} \mu(B_t)/8^{D-t}.
\end{equation}
So, in summary, the above defines an algorithm $\mathcal{A}$ with
\begin{equation}
    \tau_{\mathcal{A}} \leq 2^{15}L\frac{\log_2(1/\ep)+\log(1/\delta)}{\ep^3}\sum_{t=1}^{D} \frac{\mu(B_t)}{8^{D-t}}
= 2^{15}L(D+\log(1/\delta))\sum_{t=1}^{D} 8^t \mu(B_t).
\end{equation}
\end{proof}

Finally, we present the proof of~\Cref{cor:algo_complexity}.

\begin{proof}[Proof of~\Cref{cor:algo_complexity}]
    For convenience, we rewrite
    \begin{equation}
        \frac{1}{\ep^3}\sum_{t=1}^D \frac{m(B_t)}{8^{D-t}} = \sum_{t=1}^D 8^t m(B_t).
    \end{equation}
    For the number $N_{2^{-t}/8}(B_t)$ of sets with radius $2^{-t}/8$ required to cover $B_t$, it holds that $\frac{m(B_t)}{2^{-t}/4} \leq N_{2^{-t}/8}(B_t) \leq C 2^{-\beta t}$ and hence $m(B_t) \leq C 2^{t(\beta-1)}$ for some constant $C$. Using the formula for finite geometric sums and $\ep = 2^{-D}$, we get
    \begin{equation}
        \sum_{t=1}^D 8^t m(B_t) \leq \sum_{t=1}^D C 2^{(2+\beta)t} \leq C 2^{2+\beta)D} = \mathcal{O}(\ep^{-2+\beta}).
    \end{equation}
    The proof of the second part can be done analogously to the proof of~\Cref{cor:approx_lower_bound}.
\end{proof}

\section{Details on the numerical experiments}
In this section, we give a detailed overview over the implementation and experimental setup used in~\Cref{sec:experiments}.
\subsection{Notes on practical implementation}\label{sec:notes_practical}
The extensions of~\Cref{alg:rr} to higher dimensions we consider may require a large number of $1$-d optimizations. To control the sample complexity, we restrict the maximal depth of~\Cref{alg:rr} to a a threshold $D_{\mathrm{max}}$. Since most lines $g(s)$ may not contain the optimum, we terminate the procedure at depth $1$ when the values observed so far suggest that significant improvement at higher depth is unlikely. The resulting computation yields $g_k(\hat{s}_k^*) \leq g_k(s_k^*) + \ep$ in step $k$, where 
\begin{equation*}
    \hat{s}_k = \underset{x\in \boldsymbol{H}_k}{\mathrm{argmin}} \ g_k(x) \quad \text{and} \quad \boldsymbol{H}_k = \bigcup_{t=1}^{D_{\mathrm{max}}} H_t.
\end{equation*}
Observe that $g_k(0) = \hat{g}_{k-1}(\hat{s}^*_{k-1})$ in steps $k > 1$, yielding  $s_k = 0$ as additional observation. \\
When using~\Cref{alg:rr} in combination with Powell's method (RR Powell), we incorporate this by selecting the minimizer of $\boldsymbol{H}_k \cup \{0\}$ as $\hat{s}^*_k$: we only \emph{accept} $\hat{s}_k$ when it improves on the current minimum and choose $\hat{s}^*_k = 0$ otherwise, such that $p_{k+1} = p_k$. In the approach, where sample $u_k$ uniformly at random, we update $p_{k+1}$ to $p_k + \hat{s}_k u_k$
\begin{enumerate}[label=(\roman*)]
    \item with some acceptance probability $a_k$, depending on $g_k(\hat{s}_k)$ and $g_k(0)$ (reject)
    \item if $g_t(t^*_k) < g_{t-1}(t^*_k)$ (AIm),
\end{enumerate}
and $p_{k+1} = p_k$ otherwise. We consider $a_k = e^{-q(\hat{s}^*_{k}-\hat{s}^*_{k-1})}$ a reasonable acceptance probability. Finally, $\delta$ may also be chosen in a favourable way, since the practical performance may suffer from overly conservative multiple confidence estimates. In summary, this results in $q$, $D_{max}$, $\delta$ and $L$ as hyperparameters.

\subsection{Details on Toy problem}\label{sec:toy}

Our toy example, illustrated in~\Cref{fig:toy_function}, is a piecewise constant version of the function 
\begin{equation*}
    f(x)=1 - ((\sin(13x)\sin(27x) + 1)/4)
\end{equation*} 
of the form 
\begin{equation}
    \tilde{f}(x) = f(i/20) \qquad \mathrm{if} \quad \frac{i}{20} -\frac{1}{40} \leq x < \frac{i}{20} + \frac{1}{40}, \quad x=1,\dots,20.
\end{equation} 
and a wedge with slope $2$ around the minimum of $f$, which is at $x^* \approx 0.8675$. Note that this wedge causes $f$ to have zooming dimension $0$, which enables the convergence depicted in~\Cref{fig:results}. In the case of SPSA we evaluated the empirical mean of $10^5$ samples at each evaluation in order to obtain a comparable number of samples. For illustration purposes, it was initialized at $x=0.5$ and remained stuck in that region.\\
This experiments was conducted on a laptop and took roughly $5$ minutes.

\subsection{Details on experimental setups for VQAs}\label{sec:exp_details}
\paragraph*{Implementation of PQC}
In the example from~\citet{Arra21}, the goal is to train the parameters in an $n$-qubit circuit $V(\theta)$, such that $V(\theta) \ket{0}= \ket{0}$, with the all-zero state and optimizing following local cost function 
\begin{equation}
    C(\theta) = \tr[O_L V(\theta) \ket{0}\bra{0} V^T(\theta)],
\end{equation}
where
\begin{equation}
    O_L = \mathbbm{1} - \frac{1}{n}\sum_{i=1}^n \ket{0_i}\bra{0_i}.
\end{equation}
We trained the same PQC as in~\cite{Arra21} for $n=5, \dots, 11$ qubits with $p=n$ layers, ensuring that the objective function exhibits barren plateaus. For each $n$, we conducted $20$ simulations with initial parameters sampled uniformly at random. The optimization was terminated when either a threshold of $C=0.4$ was reached or the respective algorithm converged. For all versions of RR, we found $q=400$, $D_{\max} = 1$, $\delta=20$ and $L=0.5$ to be optimal. We tested COBYLA and Powell's method with the default settings from the respective SciPy functions and used $N=10^3, 10^4, 10^5$ circuit evaluations to approximate the objective in each iteration. The latter two only converged for $N=10^5$ on all examples. SPSA was implemented using \textit{pennylane} with default settings. We found $N=10^3$ to work best.\\
\paragraph*{Implementation of QAOA}
Our second example demonstrates the application of (vanilla) QAOA~\cite{farhi2014quantumapproximateoptimizationalgorithm} to the MaxCut problem on graphs with $n=5, \dots, 15$ vertices. The graphs were at random according to the Erd\H{o}s-Rényi model~\cite{erdHos1961strength} with edge probability $0.5$. As objective, we used $1-R_a$, where $R_a$ denotes the approximation ratio. For each $n$, we performed $100$ simulations, each utilizing a randomly generated graph and initial parameters sampled uniformly at random. For RR, we used the same hyperparameters as in the previous experiments. For the other methods, we tested $N=10^3, 10^4, 10^5, 10^6$ and found that even for $N=10^6$, less than half of them converged below the threshold of $C=0.2$ for $n>11$. Their success rate of optimization varied between $1\%$ and $30\%$ for $n=15$ across all tested values of $N$. In ~\Cref{fig:results}, we depict runs with $N=10^5$ for COBYLA and Powell's method and $N=10^4$ for SPSA, which were the smallest number of shots for which we observed reasonable convergence rates. We terminated the SPSA experiments when they roughly exceeded $10^8$ samples. The $40\%, 25\%$ and $10\%$ of the depicted runs for $n=13,14,15$, which did reach the threshold $C=0.2$ did so after at least $3\times 10^7$ samples.\\
\paragraph*{Further notes}
The simulations of both experiments were carried out separately for each seed on Intel Xeon Gold 6130 cpus with 16 CPU cores. Each simulation included all system sizes $n$ and took between $4$ and $16$ hours. 
Finally, it is worth noting that the sample complexity of these two methods is significantly smaller than that reported by~\cite{Arra21}, which our algorithm outperforms by several orders of magnitude. A possible explanation for this discrepancy is that we employed a different number of shots to approximate the objective.

\end{document}